\def\BibTeX{{\rm B\kern-.05em{\sc i\kern-.025em b}\kern-.08em
    T\kern-.1667em\lower.7ex\hbox{E}\kern-.125emX}}
\newtheorem{definition}{Definition}
\newtheorem{theorem}{Theorem}
\newtheorem{lemma}{Lemma}
\newtheorem{remark}{Remark}
\begin{document}
\title{Large-Scale Traffic Signal Control Using Constrained Network Partition and Adaptive Deep Reinforcement Learning}
\author{Hankang Gu, Shangbo Wang,
~\IEEEmembership{Member,~IEEE,}
Xiaoguang Ma, Dongyao Jia,\\ Guoqiang Mao, ~\IEEEmembership{Fellow,~IEEE,} Eng Gee Lim, ~\IEEEmembership{Senior Member,~IEEE,} Cheuk Pong Ryan Wong
\thanks{This work was supported in part by XJTLU Postgraduate Research Scholarship (FOSA2106053) and in part by Xi’an Jiaotong-Liverpool University under Research Development Fund RDF-21-02-015.}
\thanks{Hankang Gu is with the School of Advanced Technology, Xi'an Jiaotong-Liverpool University, Suzhou 215123, China, also with the Department of Computer Science, University of Liverpool,
L69 3GJ Liverpool, U.K.(email: Hankang.Gu16@student.xjtlu.edu.cn)}
\thanks{Shangbo Wang, Dongyao Jia, Eng Gee Lim are with the School of Advanced Technology, Xi'an Jiaotong-Liverpool University, Suzhou 215123, China, (e-mail: Shangbo.Wang@xjtlu.edu.cn; Dongyao.Jia@xjtlu.edu.cn; Enggee.Lim@xjtlu.edu.cn)}
                 
\thanks{Xiaoguang Ma is with the College of Information Science and Engineering, Northeastern University, Shenyang 110819, China (email: maxg@mail.neu.edu.cn)
}
\thanks{Guoqiang Mao is with the ISN State Key Lab, Xidian University, Xi'an 710126, China (email:gqmao@xidian.edu.cn)}
\thanks{Cheuk Pong Ryan Wong is with the Department of Civil Engineering,
The University of Hong Kong (email:cpwryan@hku.hk)}
}
\maketitle
\begin{abstract}
Multi-agent Deep Reinforcement Learning (MADRL) based traffic signal control becomes a popular research topic in recent years.
To alleviate the scalability issue of completely centralized RL techniques and the non-stationarity issue of completely decentralized RL techniques on large-scale traffic networks,  
some literature utilizes a regional control approach where the whole network is firstly partitioned into multiple disjoint regions, followed by applying the centralized RL approach to each region.
However, the existing partitioning rules either have no constraints on the topology of regions or require the same topology for all regions. Meanwhile, no existing regional control approach explores the performance of optimal joint action in an exponentially growing regional action space when intersections are controlled by 4-phase traffic signals (EW, EWL, NS, NSL).
In this paper, we propose a novel RL training framework named RegionLight to tackle the above limitations. Specifically, the topology of regions is firstly constrained to a star network which comprises one center and an arbitrary number of leaves. Next, the network partitioning problem is modeled as an optimization problem to minimize the number of regions. Then, an Adaptive Branching Dueling Q-Network (ABDQ) model is proposed to decompose the regional control task into several joint signal control sub-tasks corresponding to particular intersections. Subsequently, these sub-tasks maximize the regional benefits cooperatively. Finally, the global control strategy for the whole network is obtained by concatenating the optimal joint actions of all regions.
Experimental results demonstrate the superiority of our proposed framework over all baselines under both real and synthetic datasets in all evaluation metrics.

\end{abstract}
\begin{IEEEkeywords}
Adaptive traffic signal control, Multi-agent Deep Reinforcement Learning, Regional control
\end{IEEEkeywords}
\section{Introduction}
\IEEEPARstart{T}{raffic} congestion is becoming a significant problem that lead to both financial costs and environmental damage. 
According to a recent study, traffic congestion costs \pounds 595 and 73 hours per driver in the UK \cite{inrix2021uk} while drivers in the USA spent \$ 564 each and wasted 3.4 billion hours a year in total \cite{inrix2021us}.
Meanwhile, the gas emission caused by traffic congestion is now an unignorable contributor to the pollutants responsible for air pollution \cite{who}.
Therefore, there is an urgent need to apply effective strategies to relieve urban traffic congestion.  

Traffic signal control (TSC) is an efficient and direct way to reduce traffic congestion by managing and regulating the movements of vehicles\cite{roess2004traffic}. Existing TSC strategies can be generally classified into two categories: classical methods and AI (Artificial Intelligence) -based methods. Classical methods take rule-based signal plans such as Webster \cite{webster1958traffic,koonce2008traffic} and MaxBand \cite{little1981maxband} which compute optimal signal plans based on traffic parameters such as traffic demand and saturation rate. However, most classical methods assume that traffic flow is uniform and traffic signals share the same cycle length and they hardly adapt to more complex traffic dynamics in real scenarios. Therefore, some adaptive rule-based methods such as Max-pressure\cite{varaiya2013max} and self-organizing traffic lights (SOTL)\cite{gershenson2004self,
cools2013self,long2022traffic} have been proposed to control traffic signals based on real-time information. Inspired by nature, meta-heuristics methods have been applied to solve traffic signal control problems in an evolutionary and long-sighted manner\cite{shaikh2020review,gao2018solving}. Among various categories in meta-heuristics methods, genetic algorithms\cite{mitchell1998introduction}, artificial bee colony algorithms\cite{karaboga2005idea}, and harmony search algorithms\cite{geem2001new} are three common population-based methods and they have been successfully applied to traffic networks with various scales and demands\cite{gao2018meta,wang2022problem,tan2018hierarchical,gao2019meta}.

In recent years, AI-based methods especially deep reinforcement learning (DRL) techniques become very popular in sequential decision-making problems due to the huge achievement and success in both reinforcement learning (RL) and deep neural network (DNN)\cite{sutton2018reinforcement,mnih2015human,kober2013reinforcement,vinyals2019alphastar}.  
There is already considerable literature applying DRL techniques to TSC applications and showing its advantages over the classical methods\cite{thorpe1996tra,wen2007stochastic,el2010agent}. 
In early research work, the completely centralized RL technique, where one single agent controls the signal settings of all intersections in traffic grid networks, can learn the optimal policy and achieve a good convergence rate when the size of the traffic network is moderate \cite{thorpe1996tra,wen2007stochastic,el2010agent}.
However, this technique suffers from the scalability issue and is computationally impractical when the size of the traffic network becomes large \cite{haydari2020deep}. 
The completely decentralized technique, where each intersection is assigned to one agent, can overcome the scalability issue by learning the optimal signal control policy either independently or cooperatively \cite{wei2019survey}.
However, MADRL can cause the non-stationarity issue because the transition of the environment is affected by the joint actions of all agents.
The independent RL technique, which maximizes each agent's own reward without considering the change of environment caused by other agents, may fail to converge theoretically\cite{tan1993multi}.
In contrast, the cooperative RL technique aims to maximize not only individual rewards but also local or global rewards. 
Large amounts of work have proposed cooperative RL methods by either applying communication protocols or coordination strategies between agents\cite{van2016coordinated,zhang2022neighborhood,wang2020large}. Nonetheless, the non-stationary issue may still lead to suboptimal control or even convergence failure when the number of agents is large \cite{zhang2021multi}.

To achieve a trade-off between scalability and optimality, some literature has applied a compromised technique which normally involves two stages\cite{chu2016large,tan2019cooperative} where the first stage is to partition a large network into several disjoint small regions and each region is composed of a set of intersections, followed by applying the centralized RL technique to control each region. The global joint action of the whole network is a concatenation of the local action of each region.
However, existing approaches still have the following limitations:
\begin{itemize}
    \item Regions with identical topology may fail to adapt to a different network with a new distribution of intersections' degree and some networks cannot even be partitioned into multiple regions with identical topology. 
    Although the regions in \cite{chu2016large} are partitioned dynamically based on real-time traffic density, the traffic dynamics for regions with the same size but different topologies may differ. Therefore, regions of identical topology lack adaptability, and regions with unrestricted topology may be difficult for RL agents to learn regional traffic dynamics.
    \item Since each region is controlled by one centralized agent, the challenge of optimal joint action searching for the region is naturally inherited from the completely centralized RL technique. Suppose there are $N$ traffic signals and each signal has $k$ phases, then the size of joint action space is $N^k$. As the number of intersections increases, the cardinality of regional joint action space grows exponentially. Hence, the size of the output layer in deep Q-network (DQN) and that in the actor-network of actor-critic architecture also grow exponentially. Besides, existing work bound the size of joint action space by considering traffic signals with two phases but urban traffic signals usually have four phases. 
    Thus, the efficiency of searching for the optimal action for a region needs careful investigation. 
    
\end{itemize}

To overcome the above limitations, we propose a constrained partitioning rule and extend Branching Dueling Q-Network (BDQ), whose output layer size grows linearly, with an adaptive computation of target value. More specifically, our main contributions are listed as follows:
\begin{enumerate}
\item We propose a shape-free network partitioning rule which can adapt to networks with different distributions of intersections' degrees. The disjoint region under our partitioning rule includes a central intersection and a subset of its neighboring intersections. Therefore, the maximum distance between any pair of intersections is two inside the region and the topology of the region under the above constraint is a star in graph theory. We further model the network partitioning problem as an optimization problem to minimize the number of regions and give a theoretical analysis of the uniqueness of regions. For those intersections with less than four neighbors, fictitious intersections are introduced to fill the absence during training. Here we define that a region is fully loaded if there is no fictitious intersection in this region. Otherwise, this region is partially loaded.

\item We propose Adaptive BDQ (ABDQ) in which the computation of target value and loss adaptively involves only non-fictitious intersections for different regions in order to mitigate the negative influence of fictitious intersections. 
Since the location of fictitious intersections varies across regions, experimental results show that ADBQ has the potential to control a different number of intersections even in non-grid traffic networks.

\item We evaluate our framework on both real and synthetic datasets. Experimental results show that, although there is no coordination or communication considered among our regional agents, the performance of the proposed approach is better than all baselines. The robustness of our partitioning rule is further examined by employing different region configurations and trying different assignment orders. Also, ABDQ is applied to $2 \times 3$ regions to demonstrate its advantage on partially loaded regions.

\end{enumerate}
 The rest of the paper is arranged as follows: Section \ref{sec: related work} discusses the related work. Section \ref{sec: background} introduces the background and notations of traffic signal control and MARL. Section \ref{sec: regional control formulation} presents our network partitioning rule and the formulation of our regional agents. Section \ref{sec: sec experiments and results} describes the setting of experiments and discusses the results. Section \ref{sec: conclusion} summarises this paper.

\section{Related Work}
\label{sec: related work}
In this section, we review and summarize the related work in DRL-based traffic signal control. 
In recent decades, more researchers have realized that only individual information from a single intersection is not enough to design intelligent signal controllers in large transport systems and have started to utilize local or even global information.
The most straightforward way is to use a single agent to control all signals with the global information of the traffic network \cite{thorpe1996tra,wen2007stochastic,el2010agent}. Although this strategy shows a convergence advantage in small-size traffic networks, it suffers from the scalability issue which leads to a poor convergence rate in large-scale traffic networks \cite{haydari2020deep}. Till now, lots of MADRL algorithms adopting the completely decentralized technique have been proposed to coordinate the action of agents. 
Van der Pol et al. \cite{van2016coordinated} modeled the traffic network as a linear combination of each intersection and applied the max-plus algorithm\cite{kok2005using} to select joint actions. In \cite{tan2018large}, according to the congestion level between the intersection and its neighborhood, the agent can either follow a greedy policy to maximize individual rewards or Neighborhood Approximate Q-Learning to maximize local rewards.

Meanwhile, either explicitly or implicitly information-sharing protocols among completely decentralized agents have been studied in some existing literature.
Arel et al. \cite{arel2010reinforcement} proposed the NeighbourRL in which the observation of one agent is concatenated with the state of the neighboring intersections. 
Varaiya \cite{varaiya2013max} proposed the Max-pressure scheme which considers the difference between the number of waiting vehicles of upstream intersections and that of downstream intersections, and this idea was further applied in PressLight\cite{wei2019presslight}. In CoLight\cite{wei2019colight}, a Graph Attention Network (GAN) was applied to augment the observation of each agent with a dynamically weighted average of its neighboring agents' observations. In contrast, Wang et al. combined GAN with an Actor-critic framework to embed the state of neighboring intersections dynamically\cite{wang2021traffic}. Graph Convolutional Reinforcement Learning\cite{jiang2018graph} can learn underlying relations between neighboring agents and Graph Convolutional Networks (GCN)  have also been applied to automatically capture the geometric feature among neighboring intersections\cite{nishi2018traffic}. Similarly, Devailly et al. proposed inductive graph reinforcement learning where inductive learning and four different types of nodes corresponding to TSC, connection, lane, and vehicle are modeled to support the learning of the surrounding environment and to improve transferability\cite{devailly2021ig}. To speed up the training process, Zang et al. \cite{zang2020metalight} proposed a meta-learning framework named MetaLight to enhance the adaptive ability of RL learning to a new environment by reusing and transferring old experiences. 
Wang et al. \cite{wang2020large} proposed cooperative double Q-learning (Co-DQL) in which the Q-values of agents converge to Nash equilibrium. In their work, the state of one intersection is concatenated with the average value of its neighboring intersections while the reward of one intersection is summed with a weight-average of rewards of its neighboring intersections.
Zhang et al.\cite{zhang2022neighborhood} proposed neighborhood cooperative hysteretic DQN (NC-HDQN) which studies the correlations between neighboring intersections.
In their work, two methods are designed to calculate the correlation degrees of intersection. The first method, named empirical NC-HDQN (ENC-HDQN), assumes that the correlation degrees of two intersections are positively related to the number of vehicles moving between two intersections. In ENC-HDQN, the correlation degrees are always positive and the threshold is manually defined according to the demand of traffic flow. The other method named Pearson NC-HDQN (PNC-HDQN) stores reward trajectories of each intersection and computes Pearson correlation coefficients based on those history trajectories. Unlike ENC-HDQN, PNC-HDQN allows negative correlation degrees between intersections. With correlation degrees of neighboring intersections, the reward of one intersection is summed with its neighboring intersections' rewards weighted by correlation degrees respectively. 
In ENC-HDQN, the weighted sum of rewards can be interpreted as different levels of competition as the coefficients are non-negative. In PNC-HDQN, negative coefficients might be interpreted as cooperation between intersections. In \cite{jiang2021distributed}, the traffic grid network was firstly decomposed into several sub-networks based on the level of connectivity and average residual capacity. Then, all decentralized agents in each sub-network share state and reward.

Apart from the completely decentralized technique, some literature applies the regional control technique.
In \cite{chu2016large}, sub-networks are firstly initialized based on the real-time traffic density between adjacent intersections and further partitioned into small disjoint regions with normalized-cut algorithm \cite{shi2000normalized}  in network theory. Then, approximate Q-Learning is applied to search for sub-optimal actions for all regions. In \cite{tan2019cooperative}, a $ 4 \times 6$ traffic grid is partitioned into four $ 2 \times 3$ disjoint regions and controlled in a decentralized-to-centralized manner by regional DRL (R-DRL) agents and one global coordinator.
In R-DRL, deep deterministic policy gradient (DDPG) and Wolpertinger Architecture (WA) are applied to search for the optimal action for each region through three steps. Firstly, the actor generates a proto-action in continuous space. Secondly, the $K$-nearest-neighbour (KNN) algorithm is applied to map proto-action to $K$ actions in discrete space. Finally, the critic evaluates the optimal action among $K$ actions. The larger the $K$ is, the higher chance the optimal action is in $K$ actions. Therefore, this search method heavily depends on the choice of $K$ and $K$ is suggested to be proportional to the cardinality of regional joint action space in practice. To achieve better cooperation among R-DRL agents, a global coordinator is applied to achieve coordinated R-DRL agents by combining an iterative action search algorithm and one global Q-value estimator. In the iterative action search algorithm, each agent proposes its local optimum actions. Then, from this initial joint action search point, each agent iteratively chooses whether to deviate from its local optimum actions according to the Q-value computed by the global Q-value estimator. Although the iterative search approach offers sufficient trials for different sets of joint actions, the performance of the iterative search approach heavily depends on the convergence of the global function, and the assumption that the global function is well-learned is still too strong for large-scale traffic networks.

\section{Background}
\label{sec: background}

\subsection{Traffic Signal Control Definition}
\label{sec: Traffic Signal Control Definition}
Let us define a traffic network as a directional graph $\mathcal{G}=(\mathcal{V},\mathcal{E})$ where $v \in \mathcal{V}$ represents an intersection and $e_{vu}=(v,u)\in  \mathcal{E}$ represents the adjacency and connection between two intersections. The neighborhood of intersection $v$ is denoted as $NB_v=\{u|(v,u)\in \mathcal{E}\}$ and the degree of one intersection is the size of its neighborhood. $d(v,u)$ denotes the minimum number of edges to connect two intersections $v,u$.

There are two types of approaches for each intersection: The incoming approach is the approach on which vehicles enter the intersection and the outgoing approach is the approach on which vehicles leave the intersection. Each approach consists of a number of lanes and there are incoming lanes and outgoing lanes. The set of entering lanes of intersection $v$ is denoted as $Lane[v]$. A traffic movement is defined as a pair of one incoming lane and one outgoing lane. A phase is a combination of traffic movements that are set to be green. As illustrated right side in Fig. \ref{fig:Traffic network}, one intersection has four phases which are North-South Straight (NS), North-South Left-turn (NSL), East-West Straight (EW), and East-West Left-turn (EWL).
\begin{figure}[t]
    \centering
    \includegraphics[width=0.5\textwidth]{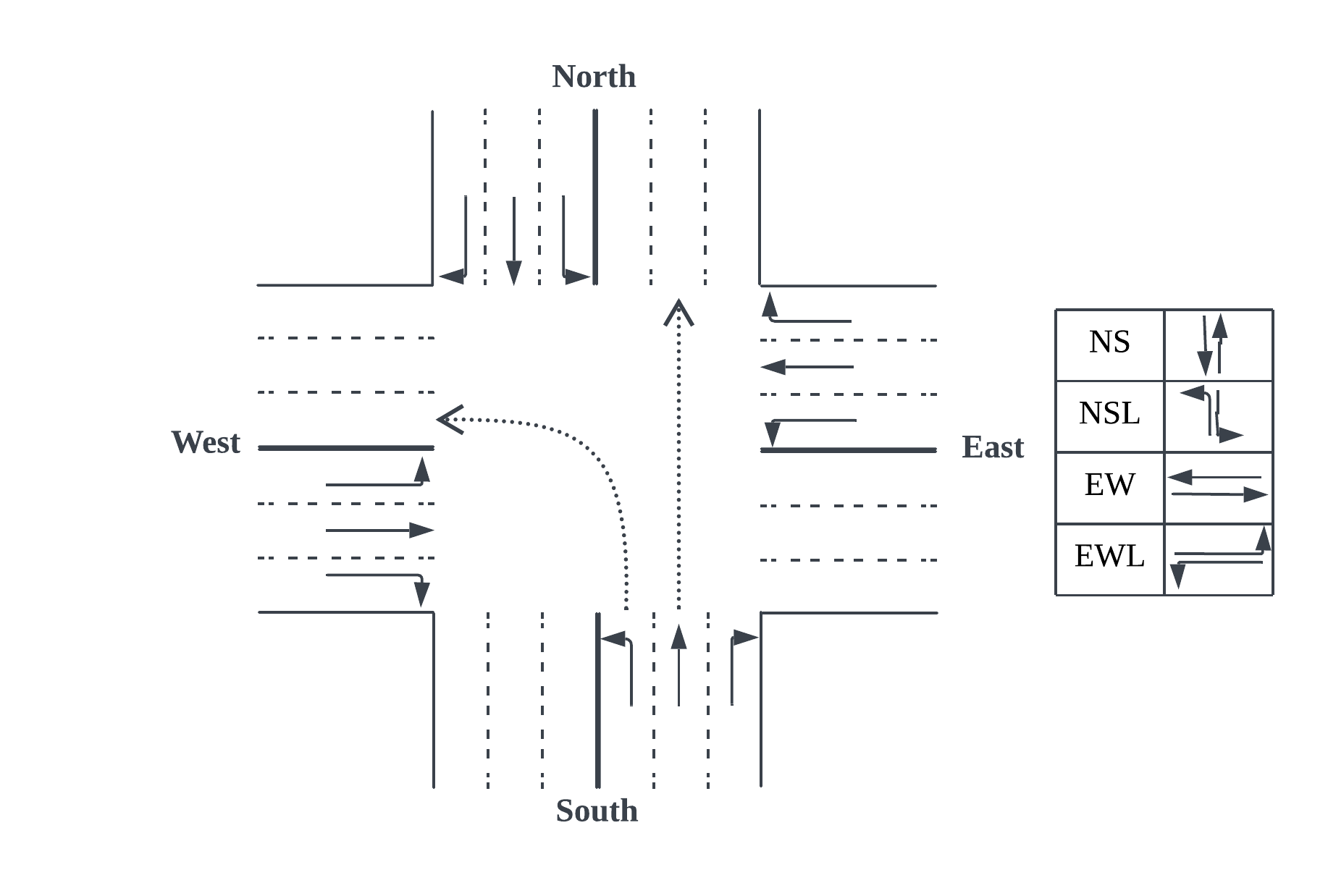}
    \caption{Traffic network example}
    \label{fig:Traffic network}
\end{figure}

\begin{table}[]
    \centering
    \caption{Notion Table}
    \begin{tabular}{|c|c|}
    \hline
        $\mathcal{V}$ & set of all intersections \\
        \hline
        $\mathcal{E}$ & set of all approaches\\
        \hline
        $NB_{v}$ & neighborhood intersections of $v$\\
        \hline
        $d(v,u)$ & minimum hops between $v$ and $u$\\
        \hline
        $Lane[v]$& entering lanes of intersection $v$\\ 
        \hline
        $wait[l]$& number of waiting vehicles on lane $l$\\
        \hline
        $wave[l]$& number of vehicles on lane $l$\\
        \hline
        $phase[v]$ & the phase of intersection $v$\\

         \hline
    \end{tabular}

    \label{tab:Notion Table}
\end{table}
\subsection{Markov Game Framework}
Multi-agent system is usually modelled as a Markov Game (MG) \cite{littman1994markov} which is defined as a tuple$\langle \mathcal{N},\mathcal{S},\mathcal{O},\mathcal{A}, R, P,\gamma \rangle$ where $\mathcal{N}$ is the agent space, $\mathcal{S}$ is the state space, $\mathcal{O}=\{\mathcal{O}_1,...,\mathcal{O}_{|\mathcal{N}|}\}$ is the observation space and $\mathcal{O}_i$ of agent $i$ is observed partially from the state of the system,  $\mathcal{A}=\{\mathcal{A}_1,...,\mathcal{A}_{|\mathcal{N}|}\}$ is the joint action space of all agents, $r_i \in R: \mathcal{O}_i \times \mathcal{A}_1\times \cdots \times \mathcal{A}_{|\mathcal{N|}}\rightarrow \mathbb{R}$ maps an observation-action pair to a real number, $P: \mathcal{S}\times \mathcal{A}_1 \times \cdots \times \mathcal{A}_{|\mathcal{N}|} \times\mathcal{S}\rightarrow [0,1] $ is the transition probability space that assigns a probability to each state-action-state transition and $\gamma$ is the reward discounted factor.

The goal of MG is to find a joint optimal policy $\pi^*=\{\pi^*_1,...,\pi^*_{|\mathcal{N}|}\}$ under which each agent $i$ maximizes its own expected cumulative reward

\begin{equation}
    \mathbb{E}_{\pi_i}[\sum_{k=0}^\infty \gamma^k r_{i,t+k} |o_{i}]
\end{equation}
where $\pi_i$ : $\mathcal{O}_i \times \mathcal{A}_i \rightarrow [0,1]$ maps the observation of agent $i$ to the probability distribution of its action. The action-value (Q-value) of agent $i$ is defined as $ Q_i(o_i,a_i)=\mathbb{E}_{\pi_i}[\sum_{k=0}^\infty \gamma^k r_{i,t+k} |o_{i},a_i]$. 
Tabular Q-learning is a classic algorithm to learn and store action-value \cite{watkins1992q}. 
The update rule is formulated as
\begin{equation}
Q_{i}(o_{i},a_{i})=Q_{i}(o_{i},a_{i})+\alpha(y-Q_{i}(o_{i},a_{i}))
\end{equation}
where $\alpha$ is the learning rate and
\begin{equation}
    y=r_{i}+\gamma \max_{ a_i \in \mathcal{A}_i}  Q_{i}(o_{i}',a_i)
\end{equation}

\subsection{Branching Dueling Q-Network}
\begin{figure*}[t]
    \centering
    \includegraphics[width=\textwidth]{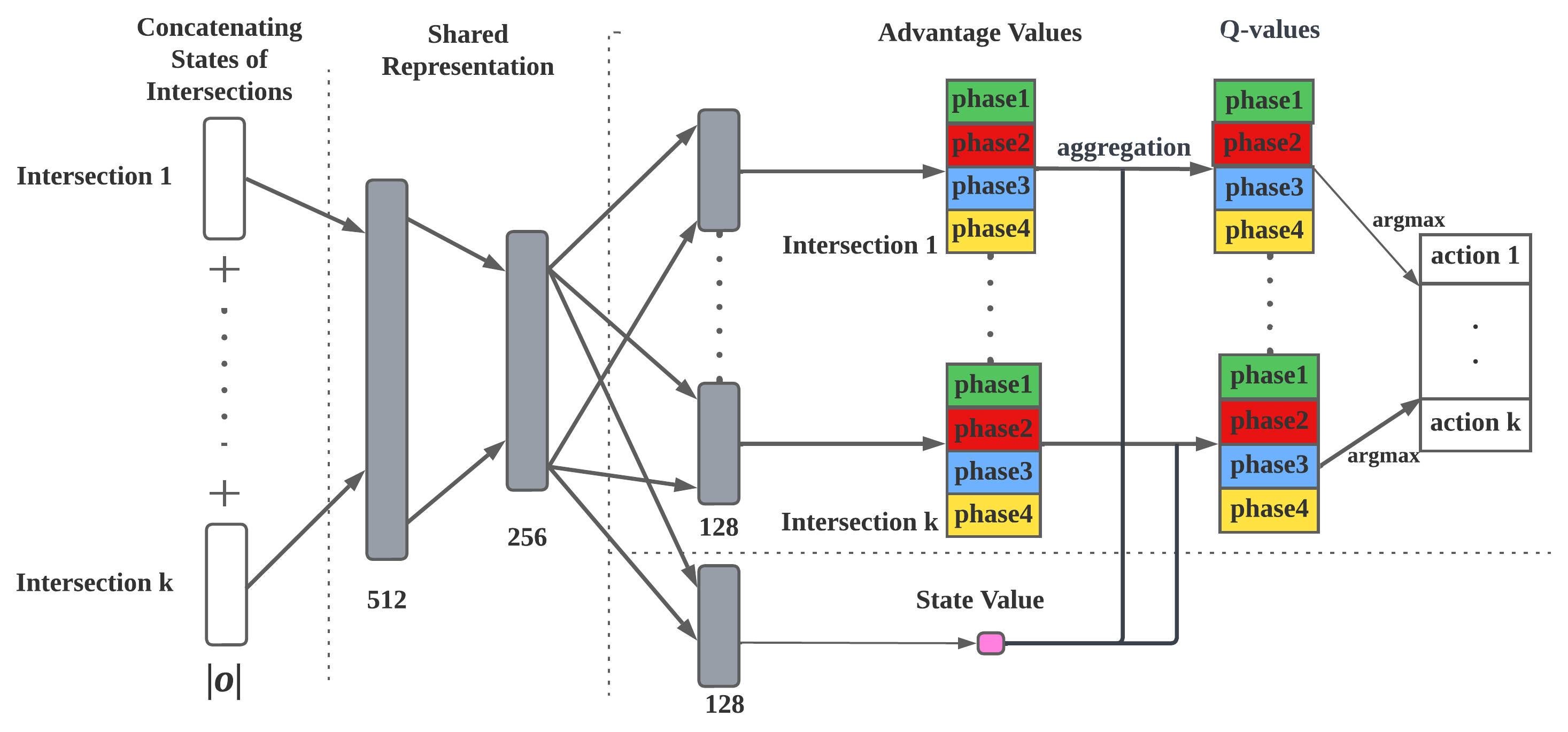}
    \caption{Structure of BDQ. Firstly, the states of intersections are concatenated into a vector and embedded through fully connected layers. Then, the last hidden layer of shared representation is used for both calculations of advantage values and the state value. Next, through the aggregation layer, Q-values of all action branches are calculated with advantage values and the state value based on Eq. (\ref{eq: aggregation }). The size of each layer is annotated below.}
    \label{figStructure of Double-BDQ}
\end{figure*}
In some complex real-life tasks such as robotic control, one task may be divided into several sub-tasks and each sub-task contains a different number of actions. 
Consequently, the size of the action space grows exponentially and the evaluation of each sub-action becomes complex. To improve the efficiency of optimal joint action searching and coordinate sub-tasks to reach a global goal, 
Travakoli et al. proposed a novel agent BDQ \cite{tavakoli2018action} to reduce the output size of the neural network while holding a good convergence property. Suppose that an agent controls $K$ intersections and each intersection has $|\mathcal{A}_k|$ actions, then the cardinality of the action space of this agent is $\prod_{k\in K} |\mathcal{A}_k|$. However, the size of the output of DQN grows exponentially while that of BDQ grows linearly (Fig. \ref{figStructure of Double-BDQ}).

As illustrated in Fig. \ref{figStructure of Double-BDQ}, suppose one RL agent controls $k$ traffic signals and the state dimension $|s_v|$ of each intersection $v$ is the same, then the input layer is a concatenated vector $R^{|\mathcal{O}|\times 1}$ where $|\mathcal{O}|=|s_v|k$. Then the input vector is embedded through two fully connected layers as shared representation layers. The last shared representation layer is first used to compute a common state value and then embedded to get advantage values of each action branch independently. 
Then, the advantage values of each action branch are further aggregated with the state value to calculate the Q-values of each action branch. In \cite{tavakoli2018action}, three ways are proposed to aggregate Q-values and we select the mean advantage aggregation for better stability as the original paper suggested. Formally, suppose that there are $K$ action branches and each action branch has $|\mathcal{A}_k|$ sub-actions, The Q-value of one sub-action is calculated by the common state value and the advantage of this sub-action over the average performance of all sub-actions in this action branch, i.e., 
\begin{equation}
    Q_k(s,a_k)=V(s)+(A_k(s,a_k)-\frac{1}{n} \sum_{ a^{'}_{k} \in \mathcal{A}_k } (A_k(s,a'_k)))
    \label{eq: aggregation }
\end{equation}

To compute the temporal difference target, we choose the mean operator to coordinate all branches to reach a global learning target, i.e., 
\begin{equation}
    y=r+\gamma \frac{1}{N} \sum_k Q^{-}_k(s',\underset{a^{'}_k \in \mathcal{A}_k}{\text{arg max}} Q_k(s',a_k'))
    \label{eq: target value compute}
\end{equation}
Based on Eq. (\ref{eq: aggregation }) and Eq. (\ref{eq: target value compute}), the loss function can be obtained by
\begin{equation}
    L=\mathbb{E}_{(s,a,r,s')\sim D}[\frac{1}{N}\sum_k(y_k-Q_k(s,a_k)^2]
    \label{eq:loss}
\end{equation}

\section{Constrained Network Partition and Formulation of Regional Agent}
\label{sec: regional control formulation}
In this section, we present our partitioning rules for regions and the formulation of regional RL agents. Previous research has demonstrated the benefits of observing either partial or complete states of neighboring intersections \cite{arel2010reinforcement,zhang2022neighborhood,wang2020large}.
Inspired by these work, we assume that an agent can observe the state of all intersections in the corresponding region. We also assume that each intersection has at most four neighbors and this assumption can be trivially extended to networks where the maximum degree is larger than four. 
\subsection{Network Partitioning Rule}
 A region $I_{v}=\{v \cup U\}$ where $U \subseteq NB_{v}$ 
 is a set of intersections including $v$ and a subset of its adjacent intersections. Region configuration $I$ is a union of several regions and follows two constraints :
 \begin{align}
     \label{eq: regionconstrain1}
     \cup_{v}I_{v}&=\mathcal{V}\\
     \label{eq: regionconstrain2}
     I_{v}\cap I_{u}&=\emptyset, \forall I_v,I_u
 \end{align}
These two constraints ensure that all regions are disjoint and each intersection is only assigned to one region.

\subsection{Optimization Problem and Region Construction}
The purpose of regional control is to alleviate the non-stationary issue by restricting the number of agents. Therefore, based on the above two constraints in Eq. (\ref{eq: regionconstrain1}) and (\ref{eq: regionconstrain2}), we further model the network partitioning problem as the dominating set problem \cite{west2001introduction} and construct the region configuration based on the minimum dominating set which is defined as follows.
\begin{definition}[Dominating Set and Domination Number]
\label{def: ds}
    A set $W \subseteq \mathcal{V}$ is a dominating set if every intersection $v \in \mathcal{V}\setminus W $ has a neighbor in $W$. The domination number $\gamma(\mathcal{G})$ is the minimum size of a dominating set in $\mathcal{G}$. The minimum dominating set is a dominating set of size $\gamma(\mathcal{G})$.
\end{definition}
\begin{remark} 
The union of centers of all regions $W=\cup_{I_v}\{v\}$ is a dominating set.
\end{remark}
    
Therefore, minimizing the number of regions is equivalent to solving the domination number and the minimum dominating set. However, the problems to solve the domination number and find the corresponding dominating set are $NP$-Hard\cite{papadimitriou1998combinatorial}. So we formulate an integer programming problem to solve the optimization problem \cite{duraisamy2021linear}. We assign one binary decision variable $x_v$ to each intersection $v \in \mathcal{V}$. Intersection $v$ is a center if its corresponding variable $x_v=0$. Otherwise, it is a leaf. Formally, the cost function and constraints are:
\begin{align}
    \max \quad & \sum_{v \in \mathcal{V}}x_v \label{eq: obj} \\
\text{s.t. }\quad & \sum_{u\in NB_{v}} x_u +x_v\leq |NB_v|, \quad \forall v \in \mathcal{V}\label{eq: constraint combined}\\
\quad&x_v= 0 \quad \text{or} \quad 1 , \quad \forall v \in \mathcal{V} \label{eq: decision variables} 
\end{align}
The objective to minimize the size of dominating set is equivalent to maximizing the number of leaves. Therefore, Eq. (\ref{eq: obj}) is satisfied. To formulate the constraints to meet Definition \ref{def: ds}, we use binary variables to represent the leaf or center and consider two situations. If one intersection $v$ is a leaf, then at least one of its neighboring intersections is a center. So the sum of decision variables of its neighboring intersections should be less than the number of its neighboring intersections, i.e., 
\begin{equation}
    \sum_{u\in NB_{v}} x_u < |NB_v|, \quad x_v=1 \\
    \label{eq: leaf constrains}
\end{equation}
Since all variables are binary, we can re-write Eq. (\ref{eq: leaf constrains}) into 
\begin{align}
    &\sum_{u\in NB_{v}} x_u \leq |NB_v|-1, \quad x_v=1 \\
    \iff&\sum_{u\in NB_{v}} x_u+x_v \leq |NB_v|, \quad x_v=1 
    \label{eq: leaf constrains2}
\end{align}
Eq. (\ref{eq: leaf constrains2}) holds because the value of $x_v$ is a constant value in this constraint. 
If one intersection $v$ is a center, then there is no constraint on its neighboring intersections.
\begin{align}
\sum_{u\in NB_{v}} x_u+x_v \leq |NB_v|, \quad x_v=0 
    \label{eq: leaf constrains3}
\end{align}
Meanwhile, the Eq. (\ref{eq: leaf constrains3}) holds trivially. As a result, the combination of Eq. (\ref{eq: leaf constrains2}) and (\ref{eq: leaf constrains3}) leads to the constraint condition shown in Eq. (\ref{eq: constraint combined}). Therefore, the integer programming formulation from Eq. (\ref{eq: obj}) to (\ref{eq: decision variables}) is verified.
After solving the optimization problem, we can get the minimum dominating set $W=\{v|x_v=0, v \in \mathcal{V}\}$.
Based on each center, we can construct region configurations around each center (Algorithm \ref{alg:Construct}). 
\begin{algorithm}
\caption{Construction of Region Configuration}\label{alg:Construct}
\begin{algorithmic}[1]
\State Input graph $\mathcal{G}$, minimum dominating set $W$
\State Initialise $key$ to track the assignment of intersection
\State Initialise $I$ to store the configuration of regions
\For {each $v$ in $\mathcal{V}$}
    \If{$v\in W$}
        \State $I[v]\leftarrow\{v\}$ \Comment{initialise region centered at $v$}
        \State $key[v]\leftarrow 1$ \Comment{mark $v$ is assigned}
    \Else
        \State $I[v]\leftarrow \emptyset$ 
        \State $key[v]\leftarrow 0$
    \EndIf
\EndFor
\For {each $v$ in $W$} \Comment{construct regions iteratively}
    \For {each $u$ in $NB_v$}
        \If {$key[u]=0$}
            \State $I[v]\leftarrow I[v]\cup u $ \Comment{$u$ is assigned to $I[v]$} \label{algorithmline: union}
            \State $key[u]\leftarrow 1$ \Comment{mark $u$ is assigned} \label{algorithmline: mark}
        \EndIf
    \EndFor
\EndFor
\State \Return $I$
\end{algorithmic}
\end{algorithm}

In Algorithm \ref{alg:Construct}, line \ref{algorithmline: union} ensures that all intersections are assigned to one region in order to satisfy the first constraint in Eq. (\ref{eq: regionconstrain1}), and line \ref{algorithmline: mark} ensures that each intersection is assigned to exactly one region to satisfy the second constraint in Eq. (\ref{eq: regionconstrain2}). However, Algorithm \ref{alg:Construct} can only construct one region configuration when there exists that a leaf intersection has two alternative neighboring centers. Next, we discuss the uniqueness of the region configuration under Algorithm \ref{alg:Construct} based on one minimum dominating set.  

\begin{theorem}
\label{theorem: uniqueness}
If, $\forall v,u \in W , d(v,u) \geq 3$, then the region configuration $I=\cup_{v\in W} \{I_v\}$ where $I_v=\{NB_v\cup v\}$ is unique.
\end{theorem}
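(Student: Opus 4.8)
The plan is to show that the distance hypothesis forces every leaf to have a single admissible center, so that Algorithm~\ref{alg:Construct} has no free choice and therefore produces one and the same configuration regardless of the order in which the centers of $W$ are processed. The first thing I would do is restate the hypothesis in a purely local form: for any two distinct $v,u\in W$, the condition $d(v,u)\geq 3$ is equivalent to saying that $v$ and $u$ are neither adjacent nor share a common neighbor. Indeed, if $u\in NB_v$ then $d(v,u)=1$, and if some $w$ satisfied $w\in NB_v\cap NB_u$ then the path $v\!-\!w\!-\!u$ would give $d(v,u)\leq 2$; either case contradicts $d(v,u)\geq 3$.

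Next I would establish the two structural facts that pin down each region. First, every leaf $w\in\mathcal{V}\setminus W$ has exactly one neighbor in $W$: existence follows because $W$ is a dominating set (Definition~\ref{def: ds} and the Remark that the union of centers is a dominating set), and uniqueness follows from the local form above, since two centers adjacent to the same $w$ would share the neighbor $w$. Second, every neighbor of a center is itself a leaf, because a neighbor lying in $W$ would be a second center at distance $1$ from the first, again contradicting the hypothesis. Together these give that the closed neighborhood of each center consists of $v$ plus leaves that belong to no other center's neighborhood, which is exactly the claimed form $I_v=\{NB_v\cup v\}$.

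With these facts in hand, the uniqueness itself is immediate: in the inner loop of Algorithm~\ref{alg:Construct}, when a center $v$ examines a neighbor $u$ with $key[u]=0$, that $u$ is a leaf whose only center is $v$, so no other iteration could ever have claimed it and no other iteration ever will; hence the assignment of $u$ to $I_v$ is forced and order-independent. I would finish by checking that the resulting $I=\cup_{v\in W}\{I_v\}$ satisfies the partition constraints of Eq.~(\ref{eq: regionconstrain1}) and Eq.~(\ref{eq: regionconstrain2}): coverage holds because $W$ dominates $\mathcal{V}$, and disjointness holds because a shared element would have to be either a common leaf of two centers or a center adjacent to another center, both of which are excluded above.

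The main obstacle is conceptual rather than computational: one must correctly interpret \emph{unique} as independence from the processing order in Algorithm~\ref{alg:Construct}, and then recognize that the only source of ambiguity is a leaf with two or more candidate centers. Once the distance-$3$ condition is translated into the ``no shared neighbor, no adjacency'' statement, this ambiguity is ruled out and the remaining steps are routine verifications. A minor point to be careful about is to treat both sources of ambiguity — a leaf adjacent to two centers, and a would-be center sitting inside another center's closed neighborhood — rather than only the first.
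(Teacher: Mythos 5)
Your proposal is correct and follows essentially the same route as the paper: the decisive step in both is that $d(v,u)\geq 3$ for all centers forces every non-center to have exactly one neighbor in $W$, which removes all ambiguity in the assignment. The paper factors this step out as its Lemma~\ref{lemma: iff} (proved by the same contraposition argument you use, that a leaf with two center neighbors would witness a shared neighbor), whereas you inline it and additionally make explicit the order-independence of Algorithm~\ref{alg:Construct} and the verification of Eq.~(\ref{eq: regionconstrain1})--(\ref{eq: regionconstrain2}); these are elaborations, not a different argument.
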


\begin{lemma}
\label{lemma: iff}
$\forall v,u \in W, (NB_v-u) \cap (NB_u-v) =\emptyset$ iff $\forall z \in \mathcal{V}\setminus W$, $|NB_z\cap W|=1$
\end{lemma}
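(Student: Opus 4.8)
The plan is to prove the biconditional one direction at a time, after first unpacking what it means for a vertex to lie in $(NB_v-u)\cap(NB_u-v)$: such a vertex $z$ is adjacent to both $v$ and $u$ and is distinct from each of them, i.e. it is a \emph{common neighbor} of the two distinct centers $v,u\in W$. I would also record at the outset that, because $W$ is a dominating set, every leaf $z\in\mathcal{V}\setminus W$ satisfies $|NB_z\cap W|\ge 1$; hence the right-hand condition $|NB_z\cap W|=1$ is equivalent to the statement that no leaf has two or more neighbors in $W$. This reformulation aligns both sides of the equivalence around the single notion of a shared neighbor and reduces the whole lemma to a bookkeeping argument about who can be a common neighbor of two centers.

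For the forward direction I would argue by contraposition. Suppose the right-hand side fails, so some leaf $z\in\mathcal{V}\setminus W$ has two distinct center neighbors $v,u\in W$. Since $z\notin W$ we have $z\neq v$ and $z\neq u$, so $z\in NB_v-u$ and $z\in NB_u-v$ simultaneously; thus $z\in (NB_v-u)\cap(NB_u-v)$, and the left-hand disjointness fails for this pair. This step is essentially immediate once the common-neighbor reformulation is in place, so I expect no difficulty here.

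The backward direction is where the real work lies. Again arguing by contraposition, suppose $(NB_v-u)\cap(NB_u-v)\neq\emptyset$ for some distinct $v,u\in W$, and pick $z$ in that intersection; by the reformulation $z$ is a common neighbor of $v$ and $u$ distinct from both. If $z$ is a leaf, then $v,u\in NB_z\cap W$ yields $|NB_z\cap W|\ge 2$, contradicting the right-hand side, and we are done. The obstacle is the remaining case $z\in W$: a priori a center could itself be adjacent to two other centers, in which case $z$ produces a nonempty intersection without violating any condition on leaves. I expect this case to be the crux of the argument.

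To dispose of the $z\in W$ case I would invoke the governing assumption on the centers under which the statement is used, namely that distinct elements of $W$ are pairwise far apart (the hypothesis $d(v,u)\ge 3$ of Theorem \ref{theorem: uniqueness}); a shared neighbor $z$ would give a path $v\,z\,u$ and hence $d(v,u)\le 2$, ruling out \emph{any} common neighbor, center or leaf. I would flag explicitly that minimality of $W$ alone does not suffice here — a vertex can be a center, have its own private leaf, and still neighbor two other centers — so this separation condition, rather than minimality, is what makes the equivalence go through. Once $z\in W$ is excluded, the leaf case above closes the contrapositive and completes the proof.
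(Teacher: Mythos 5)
Your forward (``only if'') direction is exactly the paper's argument: contraposition on a leaf with two center neighbors, with the domination property supplying $|NB_z\cap W|\ge 1$. Where you diverge is the backward (``if'') direction, and there your instinct is sharper than the paper's own proof. The paper argues this direction by claiming that for two distinct centers $z_d\neq c_d$ one has $|NB_{z_d}\cap NB_{c_d}|\le 1$, with equality only when the two centers are adjacent, and from this concludes the punctured neighborhoods are disjoint. That claim tacitly assumes any common neighbor of two centers is a leaf; it overlooks precisely the case you isolated, a common neighbor $z\in W$. And that case is fatal to the lemma as literally stated: take a path $v\,z\,u$ (with $v,u$ non-adjacent) and attach two pendant (degree-one) vertices to each of $v$, $z$, $u$. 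The unique minimum dominating set of this graph is $W=\{v,z,u\}$, every vertex outside $W$ has exactly one neighbor in $W$, and yet $z\in (NB_v-u)\cap(NB_u-v)$. So the right-hand side holds while the left-hand side fails; your remark that minimality of $W$ does not rescue the equivalence is correct, and the paper's ``if'' proof is genuinely broken rather than merely terse.

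Your repair --- importing the hypothesis $d(v,u)\ge 3$ of Theorem \ref{theorem: uniqueness} --- is sound, but it should be stated for what it is: under that hypothesis no two centers have \emph{any} common neighbor, so the left-hand side holds unconditionally and the biconditional collapses to its forward direction. You are thus proving a restricted lemma in which the backward implication is vacuous, not supplying the missing case of the stated lemma (which, per the counterexample above, cannot be supplied). This restriction is harmless for the paper's purposes: the proof of Theorem \ref{theorem: uniqueness} invokes only the forward direction --- it derives $(NB_v-u)\cap(NB_u-v)=\emptyset$ from $d(v,u)\ge 3$ and then concludes each leaf has a unique center --- so nothing downstream depends on the direction that fails. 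The clean fix, which your write-up essentially points to, is either to restate the lemma as the forward implication only, or to add the separation hypothesis $d(v,u)\ge 3$ explicitly to its statement.
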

\begin{proof}[Proof for Lemma 1]
We first prove \textit{only if} part of the lemma by contraposition. Assume that $\exists z \in \mathcal{V}\setminus W \text{ such that } |NB_z\cap W|>1$, then this intersection $z$ has at least two neighbors $v,u$ that are centers implying that $z \in NB_v$ and $z \in NB_u$. 
Since $z\neq v$ and $z\neq u$, then $(NB_v-u) \cap (NB_u -v) \neq \emptyset$ which is contradiction. Therefore, it follows that if $\forall v,u \in W, (NB_v-u) \cap (NB_u-v) =\emptyset$, then $\forall z \in \mathcal{V}\setminus W,|NB_z\cap W|=1$.

Next, we prove \textit{if} part of the lemma. Suppose $\forall z \in \mathcal{V}\setminus W, NB_z\cap W=z_d$, then, $\forall z,c \in \mathcal{V}\setminus W $ such that $z_d\neq c_d$, $ |NB_{z_d}\cap NB_{c_d}|\leq 1$. The equality only holds only when $(z_d,x_d)\in \mathcal{E}$. Therefore, $ |(NB_{z_d}-x_d) \cap (NB_{c_d}-z_d)|=0$

Hence, we finish the proof of the lemma.
\end{proof}

\begin{proof}[Proof for Theorem 1]
Since $\forall v,u \in W , d(v,u) \geq 3$, then $NB_v \cap NB_u = \emptyset$ implying $(NB_v-u) \cap (NB_u-v)  =\emptyset$. Then, based on Lemma \ref{lemma: iff}, we know that, $\forall z \in \mathcal{V}\setminus W$, $z$ is connected one unique center in $W$. Then the unique configuration $I$ is $\cup_{v\in W}I_v$ where $I_v=NB_v\cup v$. More generally, $I_v=(NB_v\setminus W)\cup v$
\end{proof}

\begin{remark}
\label{remark: reconstruct}
Let $I_{-v}=I- I_v$ denotes the region configuration except $I_v$. Suppose that $\exists z \in \mathcal{V}\setminus W$ such that $NB_z \cap W=\{v,u\}$ and we have one configuration $I$ where $z \in I_v$. Then we can construct a new configuration $I^\dagger$ by moving $z$ from $I_v$ to $I_u$. Formally, $I^\dagger=I_{-u,v} \cup \{I_v -z\}\cup \{I_u\cup z\}$ is also a valid configuration.
\end{remark}

\subsection{Fictitious Intersection}
\begin{figure}[htb]
    \centering
    \includegraphics[width=0.5\textwidth]{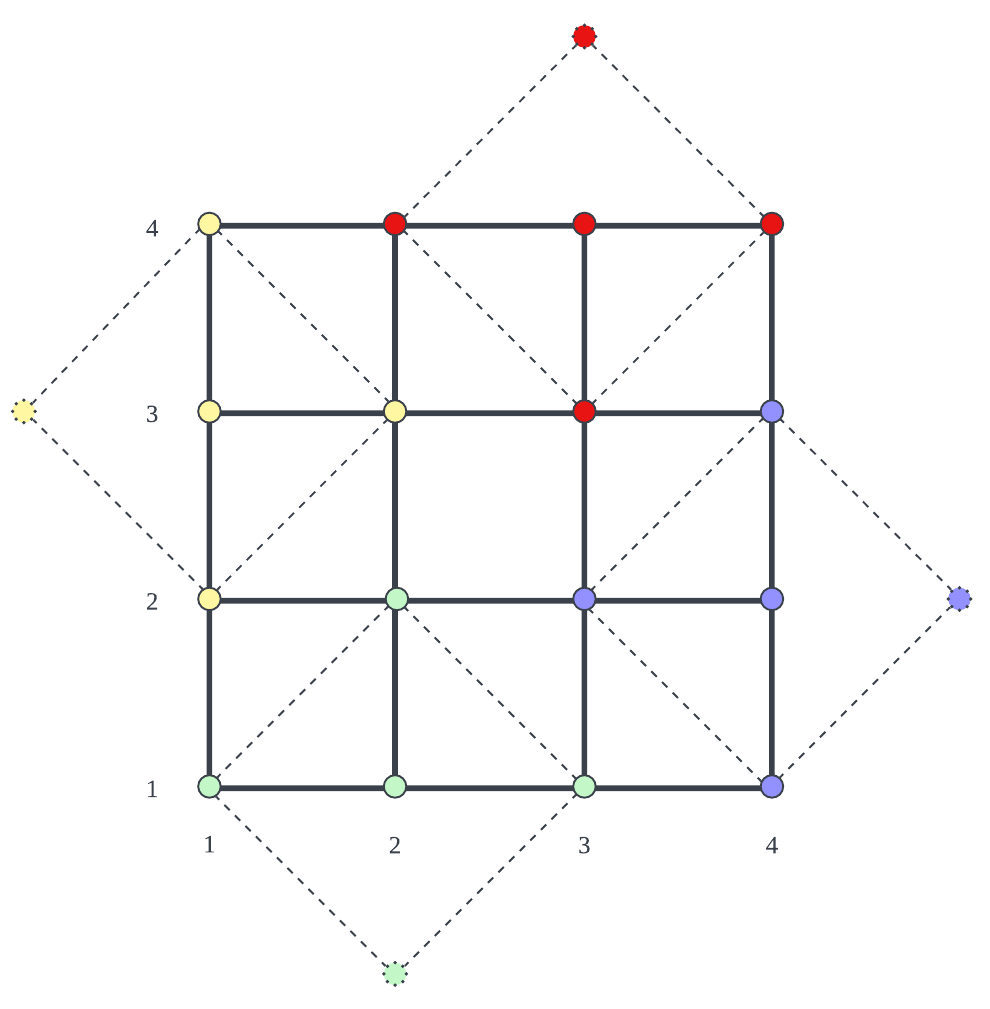}
    \caption{A four-by-four grid with rows and columns indexed and the intersection is labeled with its coordinates. $\gamma(\mathcal{G})=4$ and one corresponding dominating set is $\{1-3,2-1,3-4,4-2\}$. The regions constructed based on this set are $\{I_{1-3}, I_{2-1}, I_{3-4}, I_{4-2}\}$.}
    \label{fig:partition example}
\end{figure}
Our partitioning rule considers the adjacency of intersections and we assume each intersection has at most four adjacent intersections. However, not all regions can contain one center and four leaves. For example, if two centers are adjacent to each other or the center is at the boundary of the grid, then regions constructed based on these centers have less than four leaves. To ensure the completeness of regions, we introduce the fictitious intersection to fill the absence of adjacent intersections and the fictitious intersections will be further handled carefully in the following sections. One region configuration in the $4\times4$ grid traffic network, where centers are at the boundary, is illustrated in Fig. \ref{fig:partition example}.
 
\subsection{RL Formulation of Regional Agents}
In MG, agents interact with environments and learn within multiple episodes. In a complete episode with a length of $\mathcal{T}$ time steps, an agent observes the environment and makes actions at a certain time step $t$. Then the agent receives the reward and the observation of the next time step. 
\subsubsection{Observation Representation}
The observation of an agent $i$ whose region is $I_v$ is a concatenation of all intersections' states in the region. Formally,
\begin{equation}
    o^t_i=\{s^t_u\}_{u\in I_{v}}
\end{equation}
There are lots of types of state representations $s_u$ in literature such as queue length, waiting time and delay\cite{wei2019colight,wei2018intellilight,arel2010reinforcement}. 
In \cite{wei2019presslight}, vehicle wave on each lane is justified with the ability to fully describe the system while the most commonly used state representation is the queue length on each lane. In this paper, we combine the state representation in \cite{chu2019multi,wang2020large} with the signal phase. Formally,
\begin{equation}
s_{u}^t=\{\{wait^t[l]\}_{l \in Lane[u]},\{wave^t[l]\}_{l \in Lane[u]},phase^t[u]\}
\end{equation} 
The state of a fictitious intersection is a vector of zeros.

\subsubsection{Joint Action Space}
As defined in  Fig. \ref{fig:Traffic network}, each intersection has four phases. In TSC, two common settings of action for one intersection are "Switch" or "Choose Phase". In the "Switch" setting, one intersection chooses whether to switch to the next predefined phase or to hold the current phase. Therefore, the phase sequence $P=\{p_1,p_2,..\}$ in this setting follows fixed order and only starting time steps are allowed to deviate. In the "Choose Phase" setting, one intersection chooses which exact phase to run in the next time period. Therefore, both phase sequence and starting time steps vary and this setting offers more flexibility. Moreover, the phase sequence in "Switch" setting is a subset of that in the "Choose Phase" setting.
To both improve the travel efficiency and demonstrate the strength of our model in high-dimensional action space, 
 the joint action space of a regional agent $i$ is represented as
\begin{equation}
    \mathcal{A}_i=\{NS,NSL,EW,EWL\}^{|I_v|}
\end{equation}
\subsubsection{Reward Design}
The goal of TSC is to improve traffic conditions in a network such as reducing average travel time. 
However, average travel time can be calculated only after vehicles complete their travel. Therefore, such delayed measurement is not appropriate to be the immediate reward of agents. 
In \cite{zheng2019diagnosing}, for a single intersection, using the queue length as the reward is equivalent to minimizing average travel time. Similar to \cite{tan2019cooperative}, we assume the reward of a region is the summation of the rewards of its intersections. Regional agents learn to minimize the waiting queue length of all intersections in the region, i.e., 
\begin{equation}
    R^t_i=\sum_{u \in I_{v}} r^t_u
\end{equation}
where the reward of a single intersection $u$ at time step $t$ is defined as
\begin{equation}
    r_u^t=-\sum_{l \in Lane[u]}wait^t[l]
\end{equation}
\subsection{Adaptive-BDQ in Regional Signal Control}
In a region of $|I_v|$ intersections, there are $|I_v|$ action branches and each branch corresponds to one particular intersection. Here, we define that, if the corresponding intersection of one action branch is fictitious, then this action branch is idle. Otherwise, this action branch is activated.
In Fig. \ref{fig:partition example}, the fictitious intersections out of the boundary can be modeled as source or sink, and actions on such intersections have no influence on the dynamics inside the region.   

However, the original Equation  \ref{eq: target value compute} for computing target value using an average of all branches may mislead the estimate of the target value and further limit the performance of agents.
Therefore, we propose Adaptive-BDQ (ABDQ) in which the computation of target values in Eq. (\ref{eq: target value compute}) is further modified. For agent $i$, the Q-value of intersection $k$ and action $a_{i,k}$ at time step $t$ is $Q_{i,k}(o^t_i,a_{i,k})$.  Instead of calculating the average of all branches $k$, the Q-value of the next state is the average of activated action branches $\Tilde{k}$, i.e.,
 \begin{equation}
    y_{\Tilde{k}}=r+\gamma \frac{1}{\Tilde{N}} \sum_{\Tilde{k}} Q^{-}_{\Tilde{k}}(o_i^{t+1},\underset{a'_{\Tilde{k}} \in \mathcal{A}_{\Tilde{k}}}{\text{arg max}} Q_{\Tilde{k}}(o_i^{t+1},a_{\Tilde{k}}'))
    \label{eq: target value compute modi}
\end{equation}
where $\Tilde{N}$ is the number of activated action branches. In the loss function, only errors of activated branches are involved:
\begin{equation}
    L=\mathbb{E}_{(o,a,r,o')\sim D}[\frac{1}{\Tilde{N}}\sum_{\Tilde{k}}(y_{\Tilde{k}}-Q_{\Tilde{k}}(o,a_{\Tilde{k}})^2]
    \label{eq:loss modi}
\end{equation}

\subsection{Components and Pipeline of Training Framework}
In RL, there are two major components-- Environment and agent. Agent receives observations from the environment and returns actions. The environment then moves to the next step and passes transition tuples and the next observation to agents. The architecture of our training is illustrated in  Fig. \ref{fig:training framework}. 
Since the state of the simulator needs further augmented into observation, a Pipeline class is introduced to process data from the simulator and agents. The procedure of Pipeline is listed in Algorithm \ref{alg:pipeline}. As shown in Fig. \ref{fig:partition example}, the position of fictitious intersection varies in different regions. Thus, the indices of idle action branches are different among different regions. To accelerate convergence and improve generalized ability, we adopt the centralized learning but decentralized execution (CLDE) paradigm where agents share network parameters and experience memory.
\begin{figure}[t]
    \centering
    \includegraphics[width=0.5\textwidth]{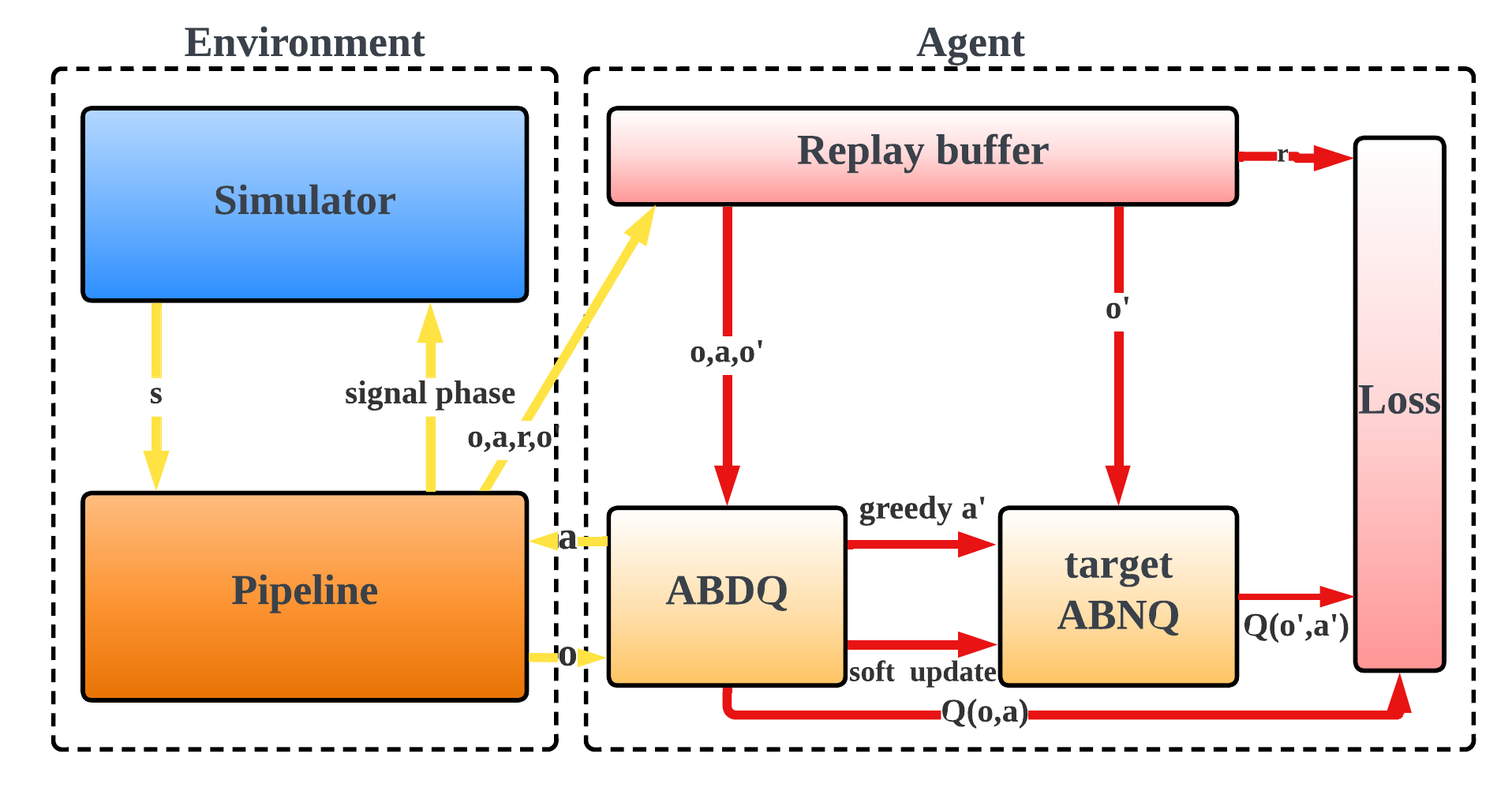}
    \caption{The training framework of RegionLight. Two major components communicate and exchange data through Pipeline Class. The working flow is separated into two parts. The first part is the interaction between the agent and the environment (Yellow arrows) and the second part is the learning and network updating of the agent (Red arrows). In the simulation step of the interaction part, the simulator passes the state to the pipeline and the pipeline then generates observations for agents. Based on the observation, the agent chooses joint action $a$ under $\epsilon$-greedy policy and passes the joint action to the pipeline. Finally, the pipeline passes signal phases to simulation and moves to the next simulation step.}
    \label{fig:training framework}
\end{figure}
\begin{algorithm}
\caption{Algorithm for Pipeline}\label{alg:pipeline}
\begin{algorithmic}[1]
\State Initialise ADBQ $\theta_i$ and target ADBQ $\theta^-_i\leftarrow\theta_i$
\State Initialise Replay Memory $D_i$, Region Configuration $I$
\While{$i < episode$}
    \State $s\leftarrow$ environment reset
    \State $o_i \leftarrow$ generate observation based on $s,I$
    \While{$t<T$}
        \If{$rand<\epsilon$}
            \State $a_i\leftarrow$ random joint action
        \Else
            \State $a_i\leftarrow \cup_{\Tilde{d}} \underset{a^{'}_{\Tilde{d}} \in A_{\Tilde{d}}}{\text{arg max}}Q_{\Tilde{d}}(o_i,a_{\Tilde{d}}')$ 
        \EndIf
        \State $r,s'\leftarrow$ env step after all agents choose actions
        \State calculate$R_i$ and generate $o_i'$ based on $s',I$
        \State store transition$(o_i,a,R_i,o_i')$ to $M_i$
        \State Update $\theta$ by Eq. (\ref{eq:loss}) with target value by Eq. (\ref{eq: target value compute modi})
        \State $\theta_i^-\leftarrow (1-\tau)\theta_i^-+\tau \theta_i$ for certain step
        \State $o_i\leftarrow o_i'$
        
    \EndWhile

\EndWhile
\end{algorithmic}
\end{algorithm}
\section{Experiments and Results}
\label{sec: sec experiments and results}
In this section, we test our method in both real and synthetic grids and compare it with other novel MADRL frameworks. To show the robustness of our region design, 
two different minimum dominating sets are used to construct region configurations in two $4 \times 4$ grid networks, and different shuffled assignment orders are applied in the $16 \times 3$ grid network.
To evaluate the improvement of ABDQ in controlling partially loaded regions, we test ABDQ on minimum-dominating-set-based regions which contain one fictitious intersection, and $2\times 3$ regions which contain two fictitious intersections.

\subsection{Experiment Scenario}
The traffic simulator CityFlow we used in this paper is an open-source simulator \cite{zhang2019cityflow}. In our experiment, one real $4\times 4$ grid (Hangzhou), one synthetic $4\times 4$ grid and one real $16 \times 3$ grid (Manhattan) are used. Roadnets of Hangzhou and Manhattan are illustrated in Fig. \ref{fig:roadnet}.
\begin{figure}[htb]
    \centering
    \subfloat[Gudang, Hangzhou]{
    \includegraphics[width=0.24\textwidth]{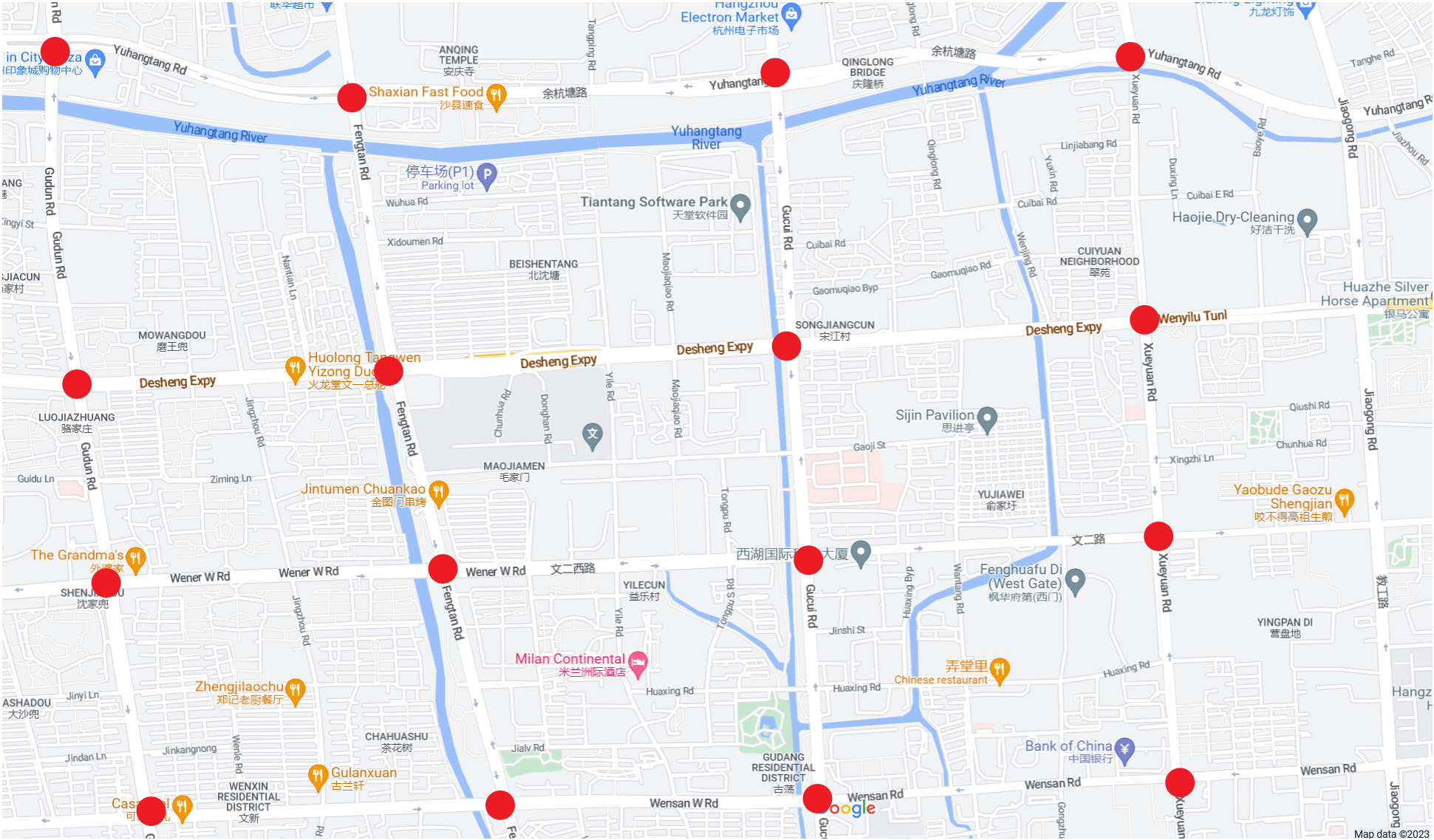}
}
    \hfill
    \subfloat[Manhattan]{
    \includegraphics[width=0.24\textwidth]{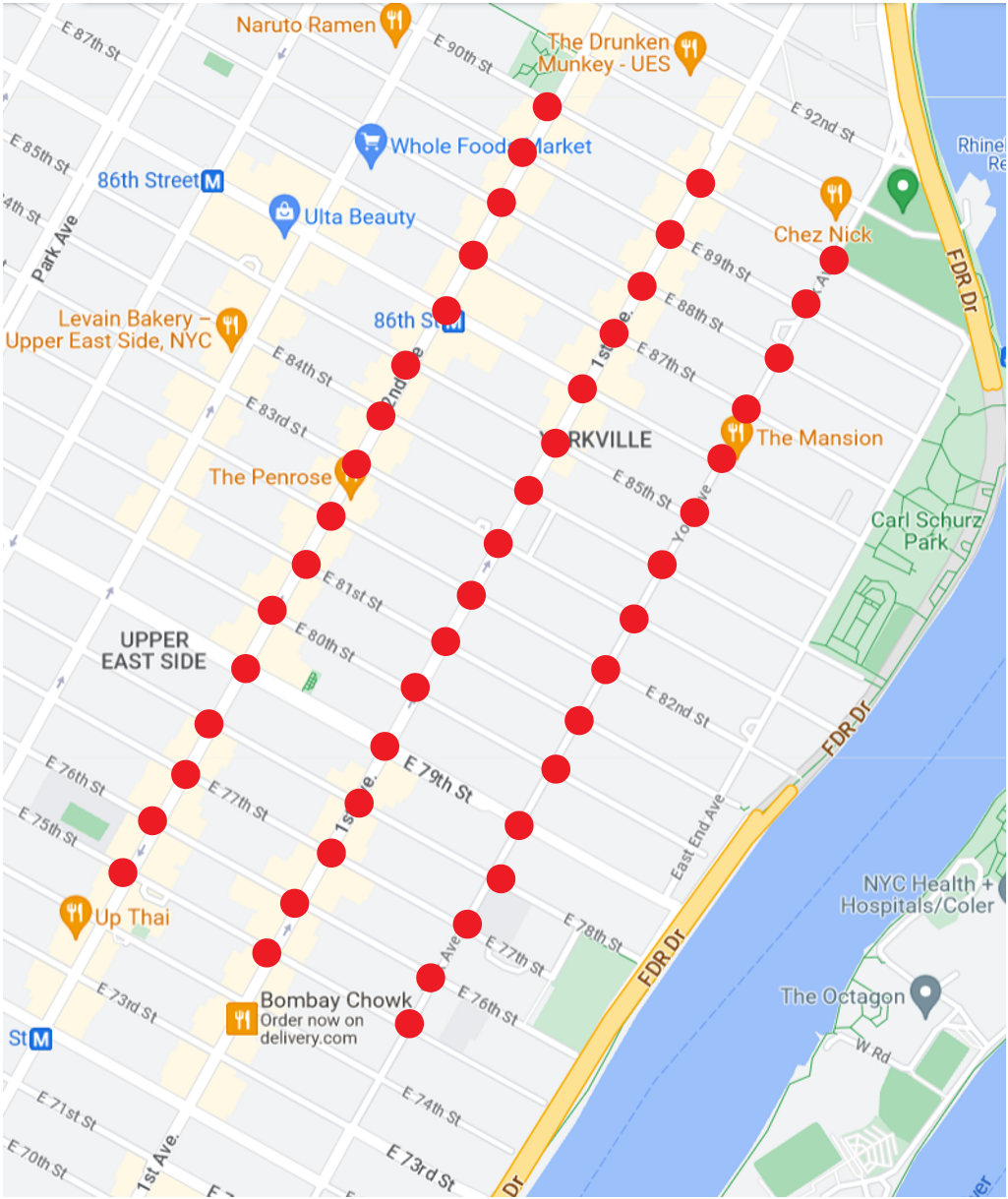}
    \hfill
    }
    \caption{Road networks for Hangzhou and Manhattan Grid. Traffic signals controlled by agents are marked with red dots.}
    \label{fig:roadnet}
\end{figure}
In the Hangzhou network, two traffic flows, whose volumes are derived from camera data, are used, i.e., one is in flat hours and the other is in peak hours. The turning ratio for both flat and peak flows is synthesized from the statistics of taxi GPS data.
In the synthetic network, the flow is generated according to Gaussian distribution, and the turning ratio for synthetic flow is distributed as 10\% (left turn), 60\% (straight), and 30\% (right turn).
In the Manhattan network, the flow is sampled from taxi trajectory data and the turning ratios of each movement at different intersections are not identical.
The statistics of flows and the distance between neighboring intersections are listed in Table \ref{tab:Flow statistics}. All these datasets are open-sourced\footnote{https://traffic-signal-control.github.io/\#open-datasets}.

\begin{table}[htb]
    \centering
        \caption{Flow and Network statistics}
    \begin{tabular}{|c|c|c|c|}
        \hline
        \multirow{2}*{Scenario} & 
        \multicolumn{2}{c|}{Arrival Rate(vehicles/s)}&
        \multirow{2}*{Distance} \\ \cline{2-3} 
        \multicolumn{1}{|c|}{} & 
        \multicolumn{1}{|c|}{Mean}&Std &  \\ \hline
        (Hangzhou) Flat & 0.83 & 1.33 & 600m  \\ \hline
        (Hangzhou) Peak & 1.82 & 2.15 &600m \\ \hline
        Synthetic-1 & 3.12 & 4.08 &300m  \\ \hline
        Synthetic-2 & 3.12 & 4.08 &600m  \\ \hline
        Manhattan & 0.78 &2.49 &100m (NS), 350m (EW) \\ \hline
    \end{tabular}
    \label{tab:Flow statistics}
\end{table}
The length of each episode is set to 4000s. To avoid signals flicking too frequently, all agents perform actions for every $\Delta t=$10s and no yellow phase is inserted between different phases. Then, the length $\mathcal{T}$ of one episode is 400. 

We compare our agent with both non-RL and RL baselines. Non-RL baselines include Fixed time, Max Pressure, SOTL, Genetic Algorithm and Artificial Bee Colony. RL baselines include NeighbourRL, R-DRL, CoLight\footnote{https://github.com/wingsweihua/colight}, PNC-HDQN and ENC-HDQN\footnote{https://github.com/RL-DLMU/PNC-HDQN}. For ABDQ, the neural network structure follows the BDQ \cite{tavakoli2018action} and is illustrated in Fig. \ref{figStructure of Double-BDQ}. The size of hidden layers of NeighbourRL is the same as ABDQ. For R-DRL, the structures of actor and critic follow the description in \cite{tan2019cooperative}. Although a global critic is proposed to coordinate R-DRL, the convergence of this global critic is not guaranteed. So only R-DRL agents are compared. The hyperparameters for our agent, NeighbourRL and R-DRL are listed in Table \ref{tab:Hyperparameter}. 

For other baselines, we run the source code for fairness.The proposed framework is open-sourced\footnote{https://github.com/HankangGu/RegionLight} and is coded in Python. Gurobi, a third-party Python package, was used to formulate and solve the linear integer programming problem\cite{gurobi}. The RL agent was coded with Tensorflow developed by Google\cite{abadi2016tensorflow}.

\subsection{Metric}
\begin{table*}[ht]
    \centering
        \caption{Numerical Statistics for non-RL Baselines}
   \begin{tabular}{|c|c|c|c|c|c|c|c|c|}
    \hline
          \multirow{2}{*}{Flow}& \multirow{2}{*}{Metric}&\multirow{2}{*}{Fixed}&\multirow{2}{*}{Max Pressure}&\multirow{2}{*}{SOTL}&Genetic&Artificial&RegionLight(OURS)&RegionLight(OURS) \\
                   &&&&&Algorithm&Bee Colony&$\gamma=0.9$&$\gamma=0.99$\\
         \hline
         \multirow{3}{*}{Flat}
         &ATT&482.19&434.65&364.42&399.91&345.10&319.14$\pm$0.43&319.28$\pm$0.43 \\
         &AQL&0.57&0.52&0.25&0.34&0.16&0.07$\pm$0.0016&0.07$\pm$0.0017\\
        &TP&2810&2854&2919&2923&2942&2963.27$\pm$0.89&2963.75$\pm$0.88 \\

         \hline
         \multirow{3}{*}{Peak}&
         ATT&803.78&525&435.72&574.2&567.87&402.02$\pm$3.00&402.93$\pm$3.27\\
        &AQL&1.8&1.42&0.82&1.45&0.95&0.44$\pm$0.01&0.45$\pm$0.01\\
        &TP&5105&6176&6224&6046&5926&6382.19$\pm$10.9&6385.8$\pm$11.26\\

         \hline
         \multirow{3}{*}{Synthetic-1}&
         ATT&548.77&235.29&284.47&-&531.55&206.6$\pm$1.3&207.60$\pm$1.2\\
        &AQL&3.32&1.19&1.78&-&3.22&0.85$\pm$0.016&0.86$\pm$0.013\\
        &TP&9553&11181&11166&-&9648&11227.93$\pm$1.07&11227.056$\pm$0.94\\
         \hline
         \multirow{3}{*}{Synthetic-2}&
         ATT&685.63&474.64&454.74&557.32&519.01&381.38$\pm$ 1.20&382.82$\pm$1.26\\
        &AQL&4.65&2.15&1.76&3.02&2.81&0.85$\pm$ 0.015&0.87$\pm$0.016\\
        &TP&9218&10690&10810&10094&10146&11033.78$\pm$8.75&11029.40$\pm$10.76\\
         \hline
        \multirow{3}{*}{Manhattan}&
         ATT&1198.24&287.62&340.67&-&-&176.46$\pm$0.316&177.75$\pm$0.415\\
        &AQL&1.09&0.13&0.21&-&-&0.024$\pm$0.0004&0.025$\pm$0.0004\\
        &TP&1116&2799&2754&-&-&2824$\pm$0.0&2824$\pm$0.0\\

         \hline
    \end{tabular}
    \label{tab:performance evaluation non_rl}
\end{table*}
\begin{table}[htb]
    \centering
    \caption{Hyperparameter Summary}
    \begin{tabular}{|c|c|c|}
     \hline
        Component&Hyperparameter &Value  \\
        \hline
         \multirow{7}*{ABDQ}& $\gamma$ &0.9 \& 0.99 \\ \cline{2-3} &
                                Learning rate $\alpha$ &0.0001\\\cline{2-3}&
                                Replay Buffer Size&200000\\\cline{2-3}&
                                Network optimizer& Adam\\\cline{2-3}&
                                Activation Function &Relu\\\cline{2-3}&
                                $\tau$ &0.001\\\cline{2-3}&
                                Batch Size &32 \\
                                     \hline
         \multirow{3}*{R-DRL}&$k$ & 128 (4$\times$ 4), 1024 (16$\times$ 3)\\\cline{2-3}&
                              $\alpha_{\text{critic}}$&0.0001\\\cline{2-3}&
                               $\alpha_{\text{actor}}$&0.00001\\    
         \hline
         \multirow{3}*{$\epsilon$-greedy Policy}& $\epsilon_{max}$ &1 \\ \cline{2-3}&
                                                  $\epsilon_{min}$ &0.001\\\cline{2-3}&
                                                  decay steps & 20000\\
                          
         \hline
    \end{tabular}
    \label{tab:Hyperparameter}
\end{table}
Similar to \cite{zhang2022neighborhood}, we choose three metrics to evaluate the performance of agents.
\begin{itemize}
    \item Average Travel Time (ATT): average of all vehicles' travel time. Since the time step of our simulation is larger than the arrival time span, all vehicles can travel in the network for enough time and the computation of ATT is more complete and less affected by vehicles which just depart;
    \item Average Queue Length (AQL): average queue length on each lane of all intersections. The definition of the reward of our agent is the sum of queue length. So AQL is a direct numerical interpretation of reward;
    \item Throughput (TP): number of vehicles that arrive at the destination. While reducing ATT, we also want to increase TP so that benefits are maximized.
\end{itemize}
\subsection{Overall Performance}
\begin{figure*}[htb]
    \centering
    \subfloat[(Hangzhou)Flat]{
    \includegraphics[width=0.23\textwidth]{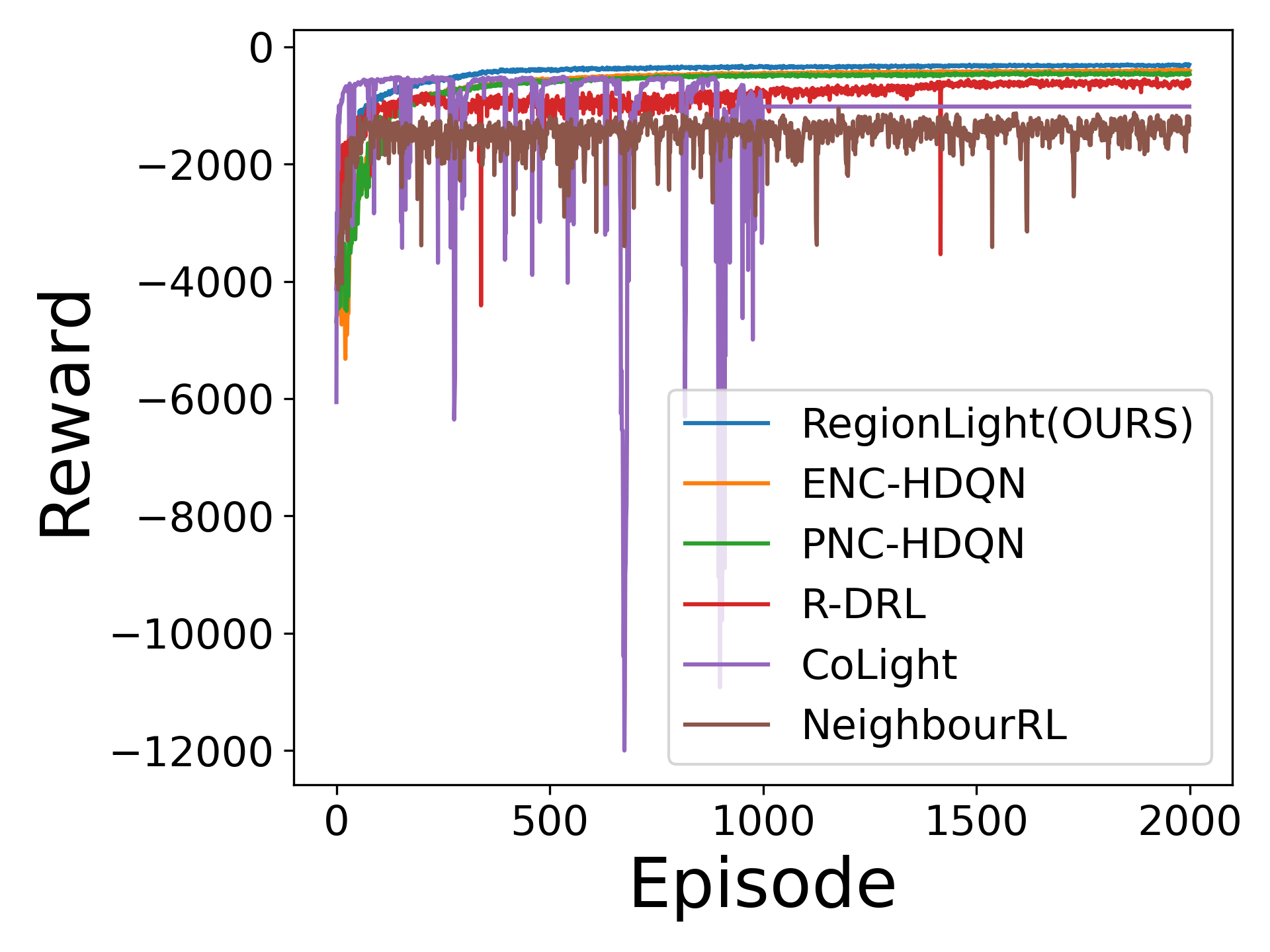}
}
    \hfill
    \subfloat[(Hangzhou)Peak]{
    \includegraphics[width=0.23\textwidth]{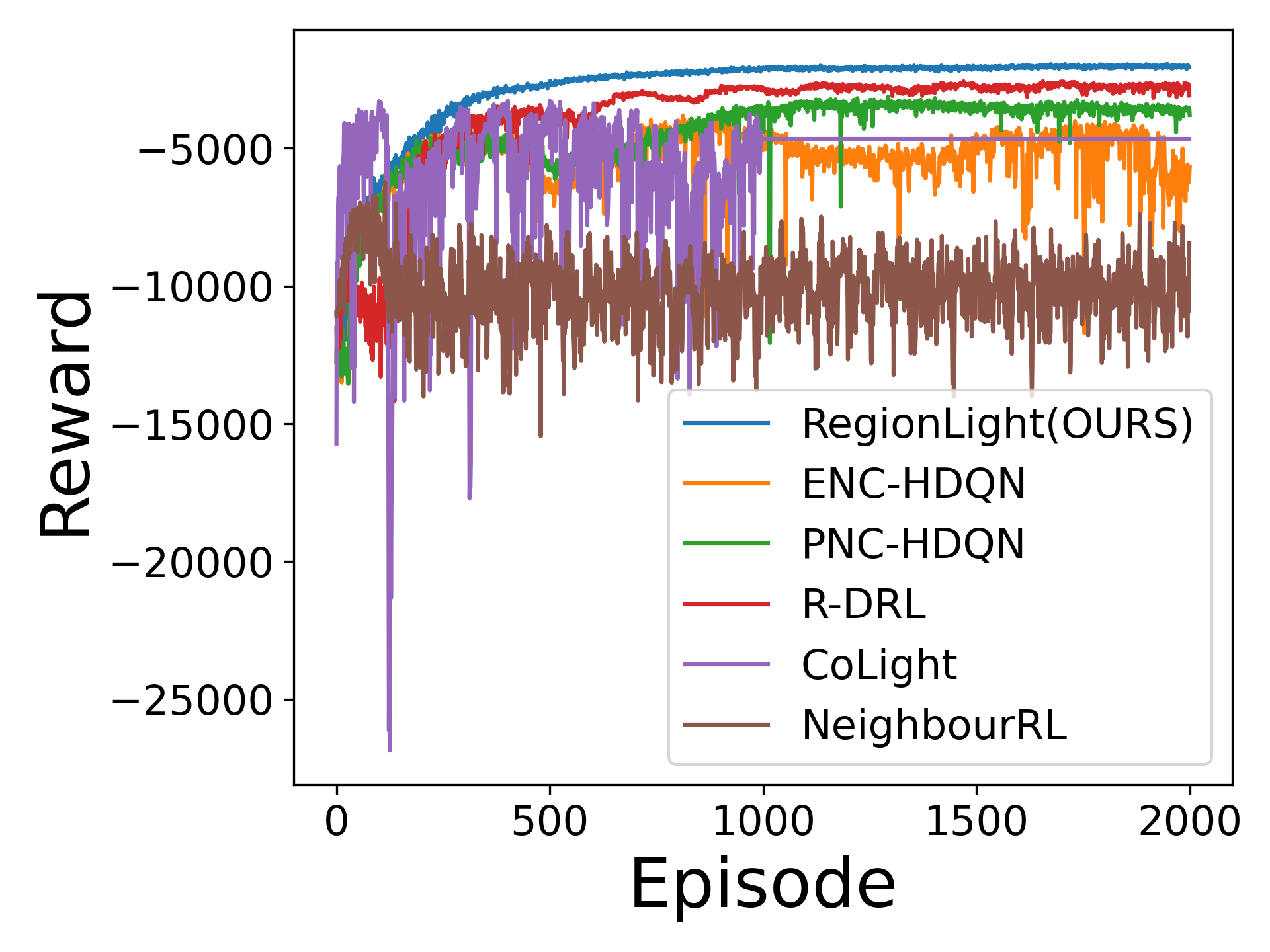}
    }
    \hfill
      \subfloat[Synthetic-1]{
    \includegraphics[width=0.23\textwidth]{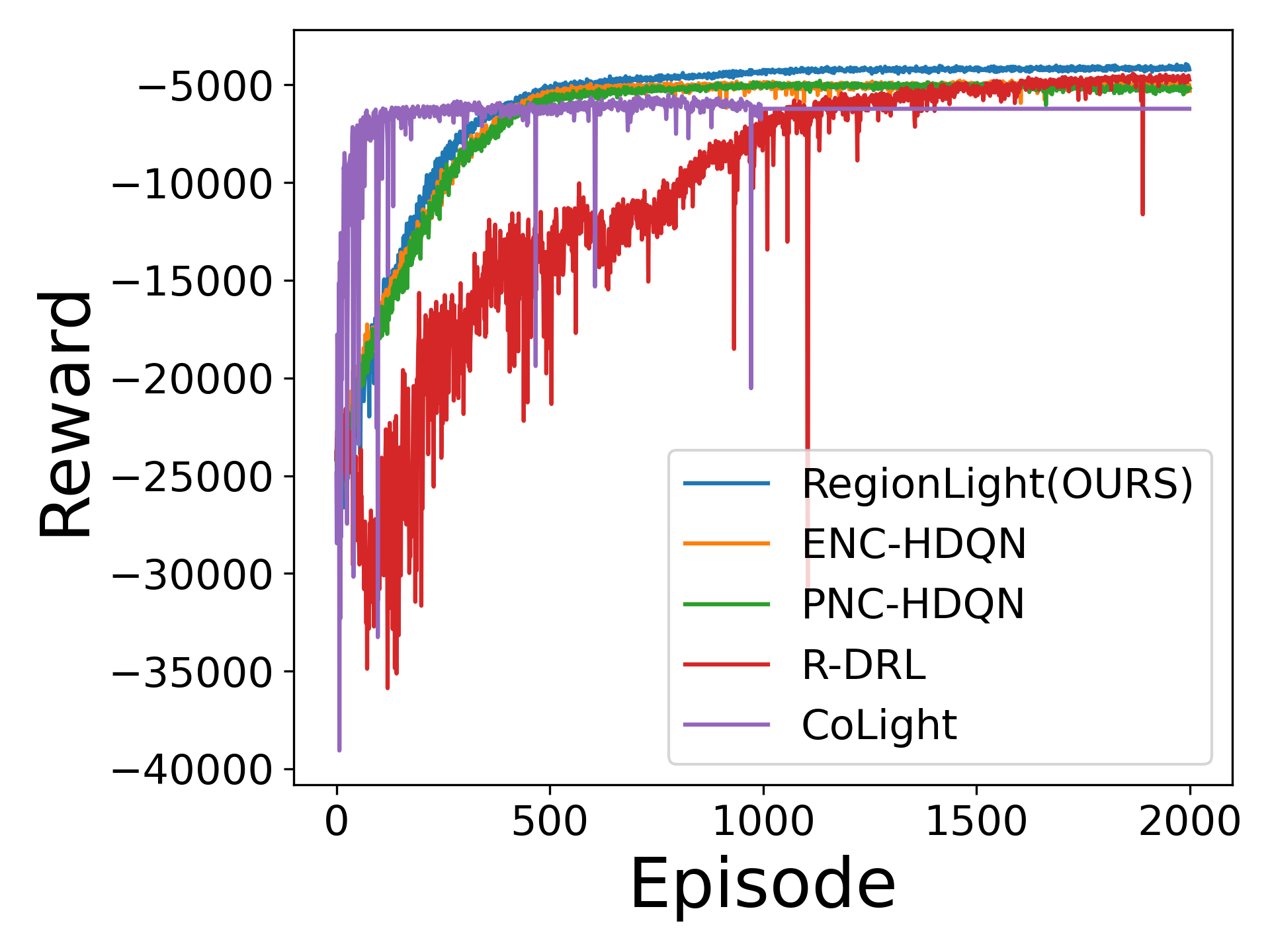}
    }
        \hfill
      \subfloat[Synthetic-2]{
    \includegraphics[width=0.23\textwidth]{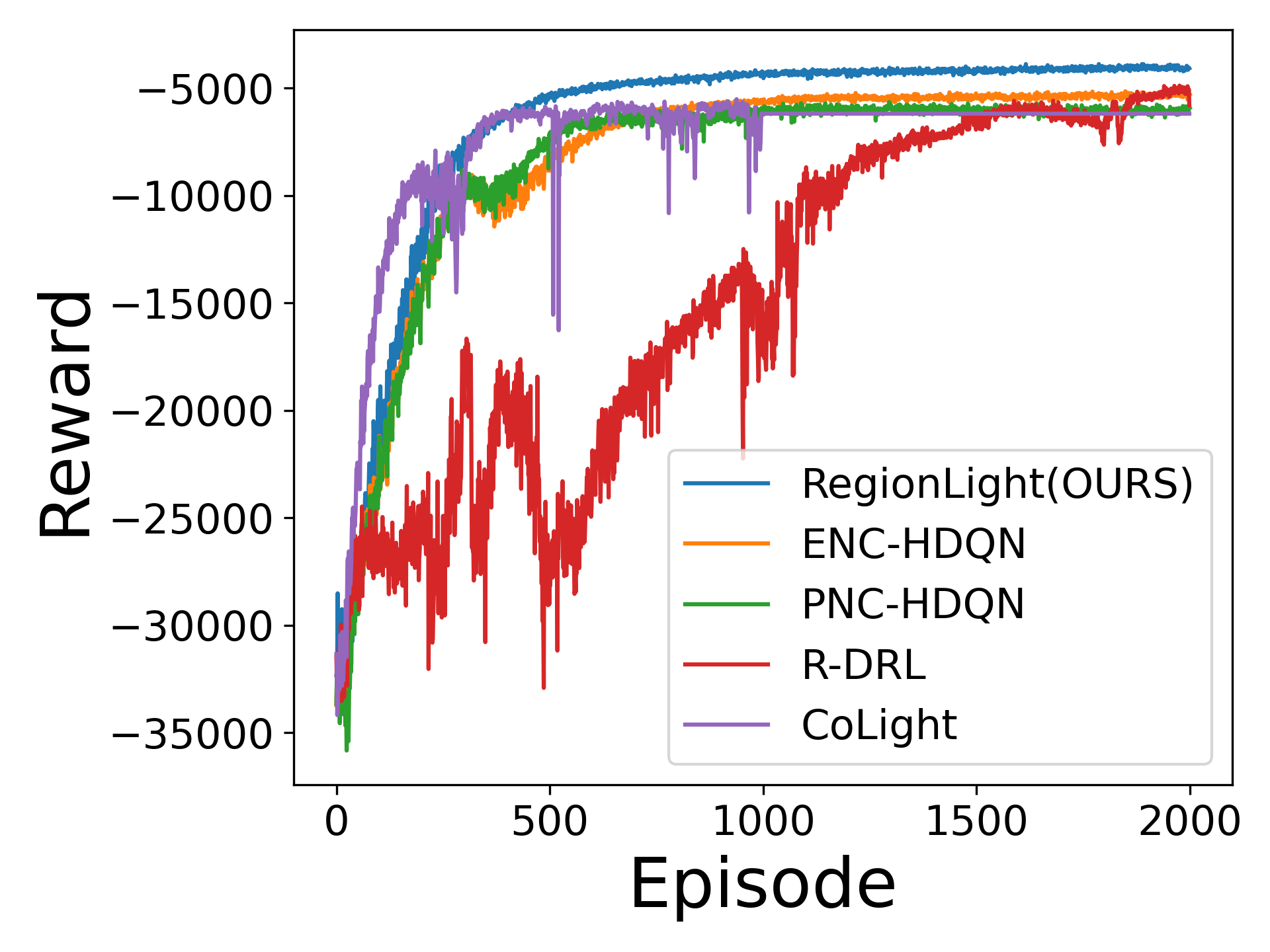}
    }
        \hfill

    \subfloat[Flat after Convergence]{
    \includegraphics[width=0.23\textwidth]{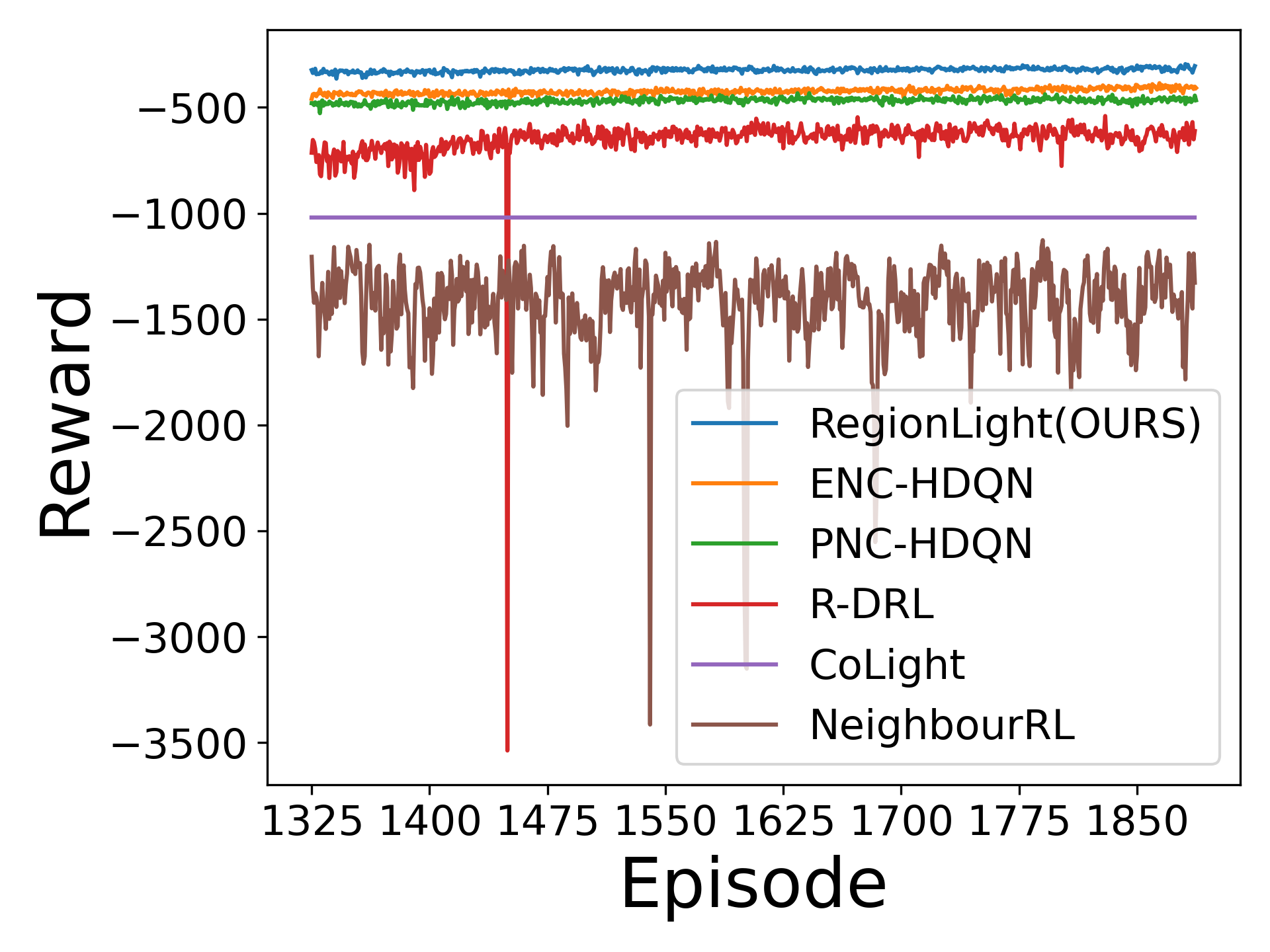}
}
    \hfill
    \subfloat[Peak after Convergence]{
    \includegraphics[width=0.23\textwidth]{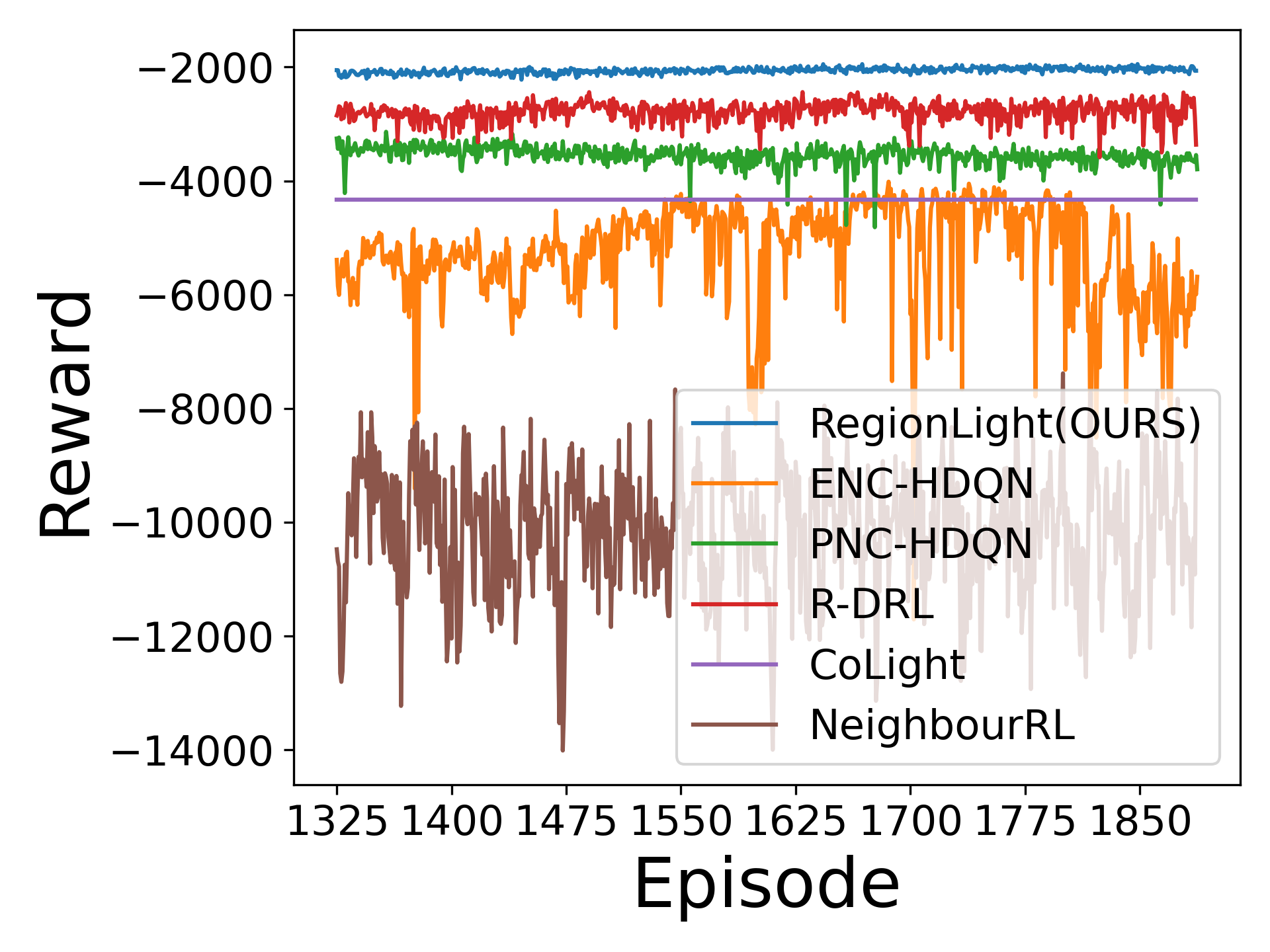}
    }
    \hfill
      \subfloat[Synthetic-1 after Convergence]{
    \includegraphics[width=0.23\textwidth]{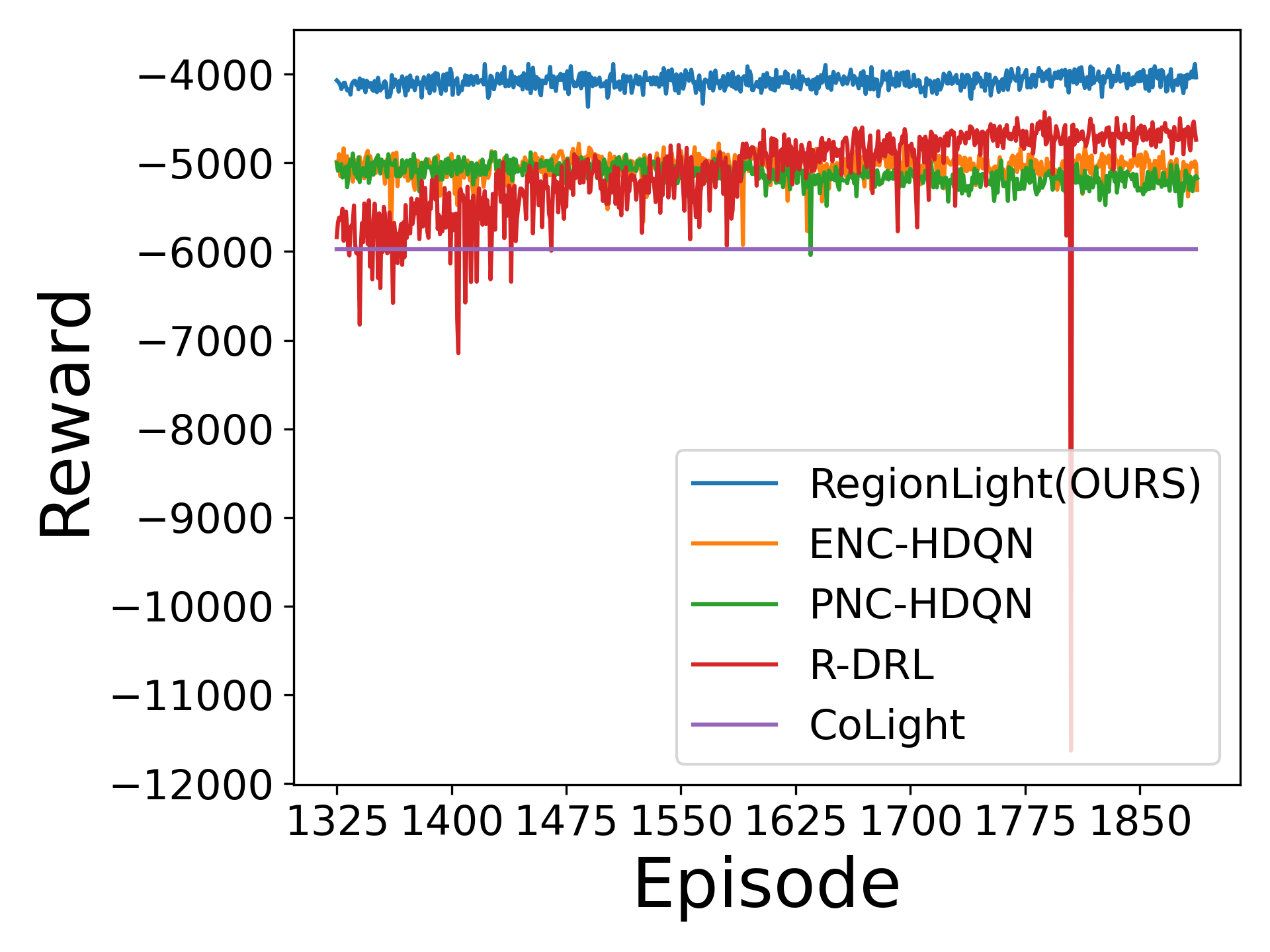}
    }
        \hfill
      \subfloat[Synthetic-2 after Convergence]{
    \includegraphics[width=0.23\textwidth]{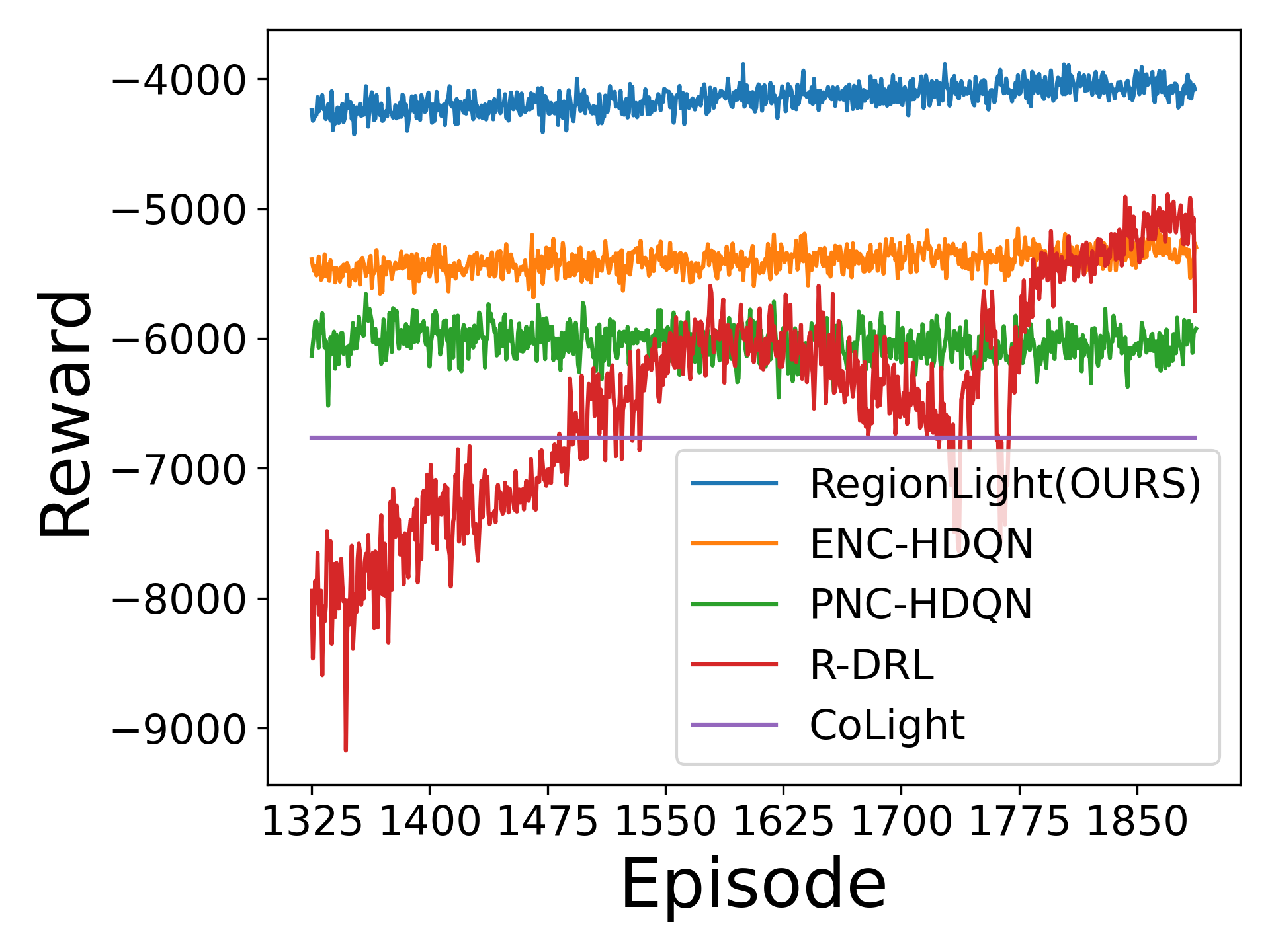}
    }
    \hfill

    \caption{Learning curve of RL agents in $4 \times 4$ networks. The top three pictures are the overall performance during 2000 episodes and the bottom three pictures are the overall performance after episode 1250. Curves of methods failing to converge are omitted.}
    \label{fig:reward curve}
\end{figure*}

\begin{figure*}[htb]
    \centering
      \subfloat[Manhattan]{
    \includegraphics[width=0.3\textwidth]{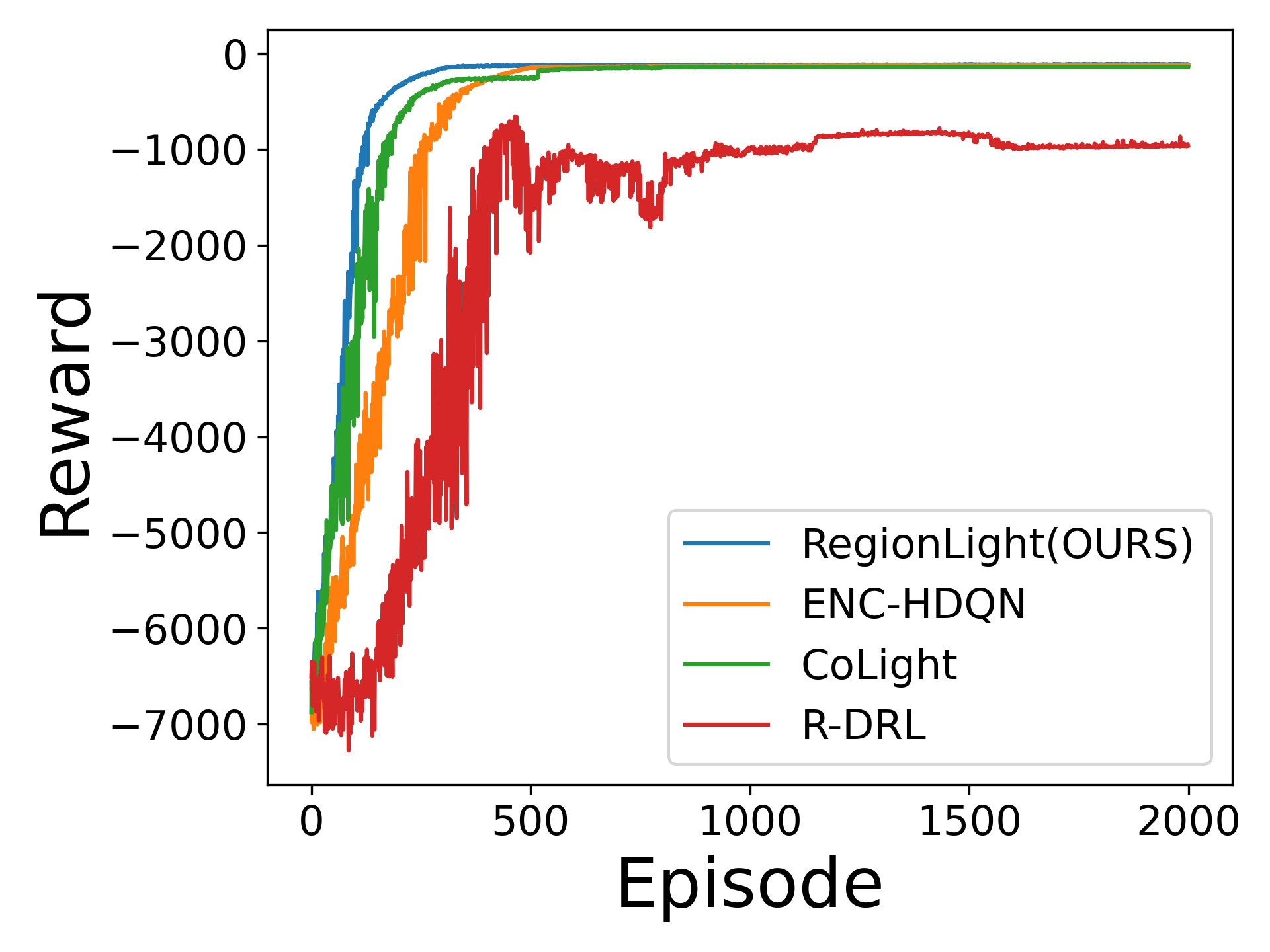}
    }  
      \subfloat[Manhattan after Convergence]{
    \includegraphics[width=0.3\textwidth]{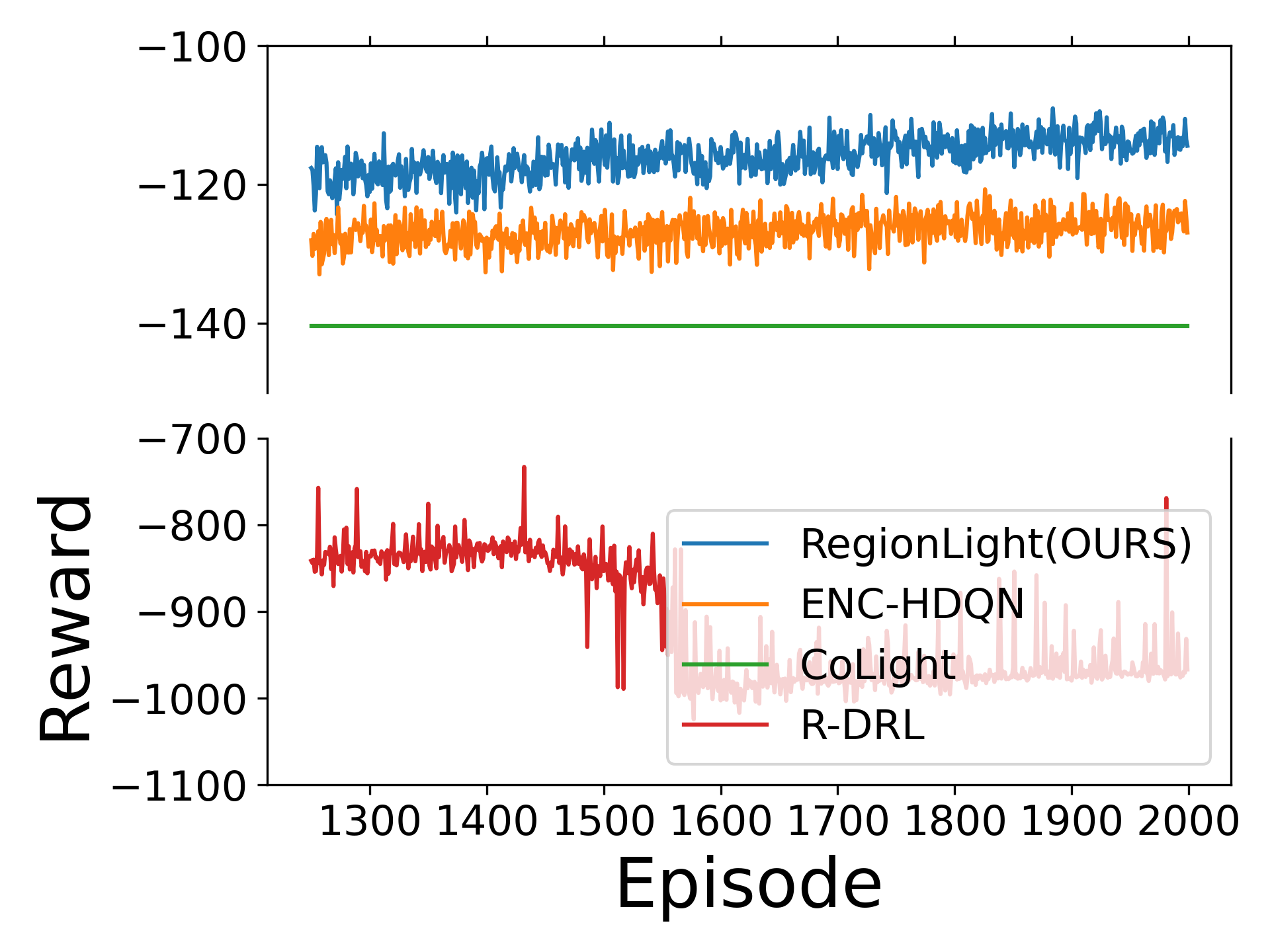}
    }
    \caption{Learning curve of RL agents in $16 \times 3$ networks. The left picture is the overall performance during 2000 episodes and the right picture is the overall performance after episode 1250. Curves of methods failing to converge are omitted.}
    \label{fig:reward curve 16_3 network}
\end{figure*}

\begin{table*}[htb]
    \centering
        \caption{Numerical Statistics for RL Baselines}
    \begin{tabular}{|c|c|c|c|c|c|c|c|c|}
    \hline
         \multirow{3}{*}{Flow}& \multirow{3}{*}{Metric}& \multirow{3}{*}{NeighbourRL}& \multirow{3}{*}{R-DRL}& \multirow{3}{*}{CoLight}& \multirow{3}{*}{PNC-HDQN}& \multirow{3}{*}{ENC-HDQN}& RegionLight& RegionLight \\
         &&&&&&&(OURS)&(OURS)\\
         &&&&&&&$\gamma=0.9$&$\gamma=0.99$\\
         \hline
         \multirow{3}{*}{Flat}&
         ATT&379.83$\pm$8.37&335.29$\pm$1.65&334.01$\pm$0.95&328.13$\pm$0.49&326.23$\pm$0.59&319.14$\pm$0.43&319.28$\pm$0.43\\
         &AQL&0.29$\pm$0.031&0.13$\pm$0.01&0.28$\pm$0.34&0.10$\pm$0.0020&0.09$\pm$0.0021&0.07$\pm$0.0016&0.07$\pm$0.0017\\
        &TP&2930.79$\pm$7.19&2950.75$\pm$3.81&2899.92$\pm$144.7&2942.9$\pm$3.09&2959.776$\pm$1.60&2963.27$\pm$0.89&2963.75$\pm$0.88\\

         \hline
         \multirow{3}{*}{Peak}&
         ATT&675.01$\pm$60.26&415.75$\pm$4.94&463$\pm$7.11&437.09$\pm$5.84&479.25$\pm$26.14&402.02$\pm$3.00&402.93$\pm$3.27\\
        &AQL&2.17$\pm$0.21&0.58$\pm$0.04&1.31$\pm$0.37&0.75$\pm$0.03&1.12$\pm$0.23&0.44$\pm$0.01&0.45$\pm$0.01\\
        &TP&5669.4$\pm$226.92&6340.18$\pm$18.28&6034$\pm$370.6&6296.85$\pm$29.96&6222.2$\pm$136.89&6382.19$\pm$10.9&6385.8$\pm$11.26\\

         \hline
         \multirow{3}{*}{Synthetic-1}&
         ATT&-&216.03$\pm$7.28&246.14$\pm$18.19&228.24$\pm$2.0&224.6$\pm$1.6&206.6$\pm$1.3&207.60$\pm$1.2\\
        &AQL&-&0.99$\pm$0.1&1.27$\pm$0.2&1.08$\pm$0.02&1.05$\pm$0.02&0.85$\pm$0.016&0.86$\pm$0.013\\
        &TP&-&11173.29$\pm$63.63&11179.2$\pm$201.3&11179.8$\pm$15.67&11215.41$\pm$4.96&11227.93$\pm$1.07&11227.056$\pm$0.94\\
\hline
         \multirow{3}{*}{Synthetic-2}&
             ATT&-&411.33$\pm$9.64&422$\pm$12.9&415.42$\pm$1.91&404.82$\pm$1.39&381.38$\pm$1.20&382.82$\pm$1.26\\
        &AQL&-&1.3$\pm$0.3&1.27$\pm$0.06&1.26$\pm$0.023&1.11$\pm$0.016&0.85$\pm$0.015&0.87$\pm$0.016\\
        &TP&-&10919.38$\pm$83.08&10896.47$\pm$33.66&10886.39$\pm$15.45&10965.65$\pm$9.81&11033.78$\pm$8.75&11029.40$\pm$10.76\\
         \hline
                  \multirow{3}{*}{Manhattan}&
         ATT&-&319.82$\pm$4.34&184.25$\pm$2.437 &-&181.25$\pm$0.372&176.46$\pm$0.316&177.75 $\pm$0.415\\
        &AQL&-&0.2$\pm$0.005&0.047$\pm$0.0024&-&0.026$\pm$0.0005&0.024$\pm$0.0004&0.025$\pm$0.0004\\
        &TP&-&2597.12$\pm$7.81&2824$\pm$0.0&-&2823.996$\pm$0.063&2824$\pm$0.0&2824$\pm$0.0\\

         \hline
    \end{tabular}

    \label{tab:performance evaluation}
\end{table*}
We first compare our model with non-RL baselines. Numerical statistics results are listed in Table \ref{tab:performance evaluation non_rl}. The results of the Artificial Bee Colony in Manhattan scenarios and those of the genetic algorithm in Synthetic-1 and Manhattan scenarios are omitted because these algorithms failed to converge after random initialization. From Table \ref{tab:performance evaluation non_rl}, we can observe that our model outperforms other baselines in all metrics

Then we compare our model with RL baselines. Since the definitions of the reward of RL baselines involve augmentations, we computed the episode reward as the below equation for consistency, i.e.,
\begin{equation}
    R= \frac{1}{|\mathcal{V}|} \sum_{t=1}^\mathcal{T}\sum_v r_v^t
\end{equation}
The learning process of RL models involves lots of episodes. Therefore, we plot the learning curve of episode reward of all RL baselines. Since NeighbourRL failed to converge in the Synthetic and Manhattan datasets and PNC-HDQN failed to converge in the Manhattan dataset, the corresponding curves are omitted. Meanwhile, the performance of our model under different parameter sets is almost identical. Therefore, we only plot the learning curve of our model under $\gamma=0.9$. As illustrated in Fig. \ref{fig:reward curve} and \ref{fig:reward curve 16_3 network}, our agent converges in all scenarios. 
As the arrival rate increases, the curve becomes oscillated. However, our agent is the most stable one and with the smallest fluctuation.  
From the bottom plots in Fig. \ref{fig:reward curve}, our agent achieves the highest reward after convergence. 

Numerical results are listed in Table \ref{tab:performance evaluation} and \ref{tab:performance evaluation non_rl} where the results of models that fail to converge are omitted. We first look at $4\times 4$ networks, i.e., Hangzhou(Flat), Hangzhou(Peak), Synthetic-1 and Synthetic-2. 
In the Hangzhou scenario, as the flow density increases, the average travel time becomes longer and the network becomes congested. 
In two synthetic scenarios, the only difference is the distance between two intersections. Therefore, vehicles travels longer distances and the average travel time is much longer. Moreover, the throughput of Synthetic-2 is slightly smaller than that of Synthetic-1 because, in Synthetic-2, the remaining time is not enough for those vehicles which depart near the end of episode. 
The average queue length of regional agents in the both synthetic scenarios almost doubles that in the Hangzhou peak scenario while those of CoLight and ENC even decrease. This is probably caused by two factors: The first is the distance between intersections and the arrival rate;
If intersections are close, the travel time between intersections becomes much shorter and it is harder to decrease the queue length. The second is that regional agents try to minimize the total queue length in a region and some intersections may have to sacrifice individual rewards to achieve better regional rewards. 
In the Manhattan scenario, some baselines fail to converge probably due to more complex traffic dynamics in the huge network even though the flow of Manhattan is not as demanding as Hangzhou scenarios.
The difference in throughput between agents is not significant because simulation steps are large enough for most vehicles to arrive at their destinations. Overall, our agent achieves the best results and the smallest standard deviation among all metrics. 
\subsection{Robustness of Our Region}
\begin{figure*}[ht]
    \centering
    \subfloat[(Hangzhou)Flat]{
    \includegraphics[width=0.23\textwidth]{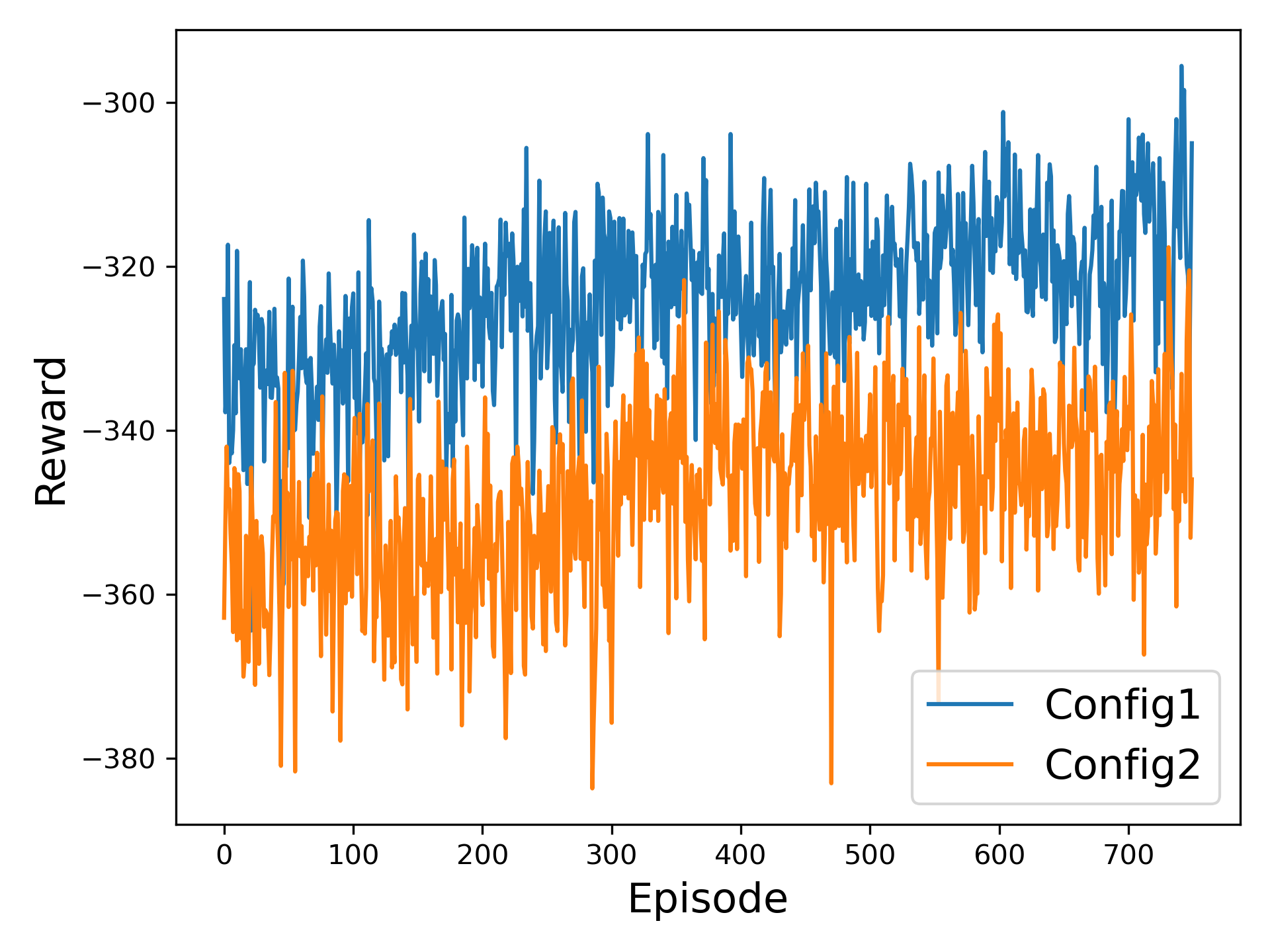}
}
    \hfill
    \subfloat[(Hangzhou)Peak]{
    \includegraphics[width=0.23\textwidth]{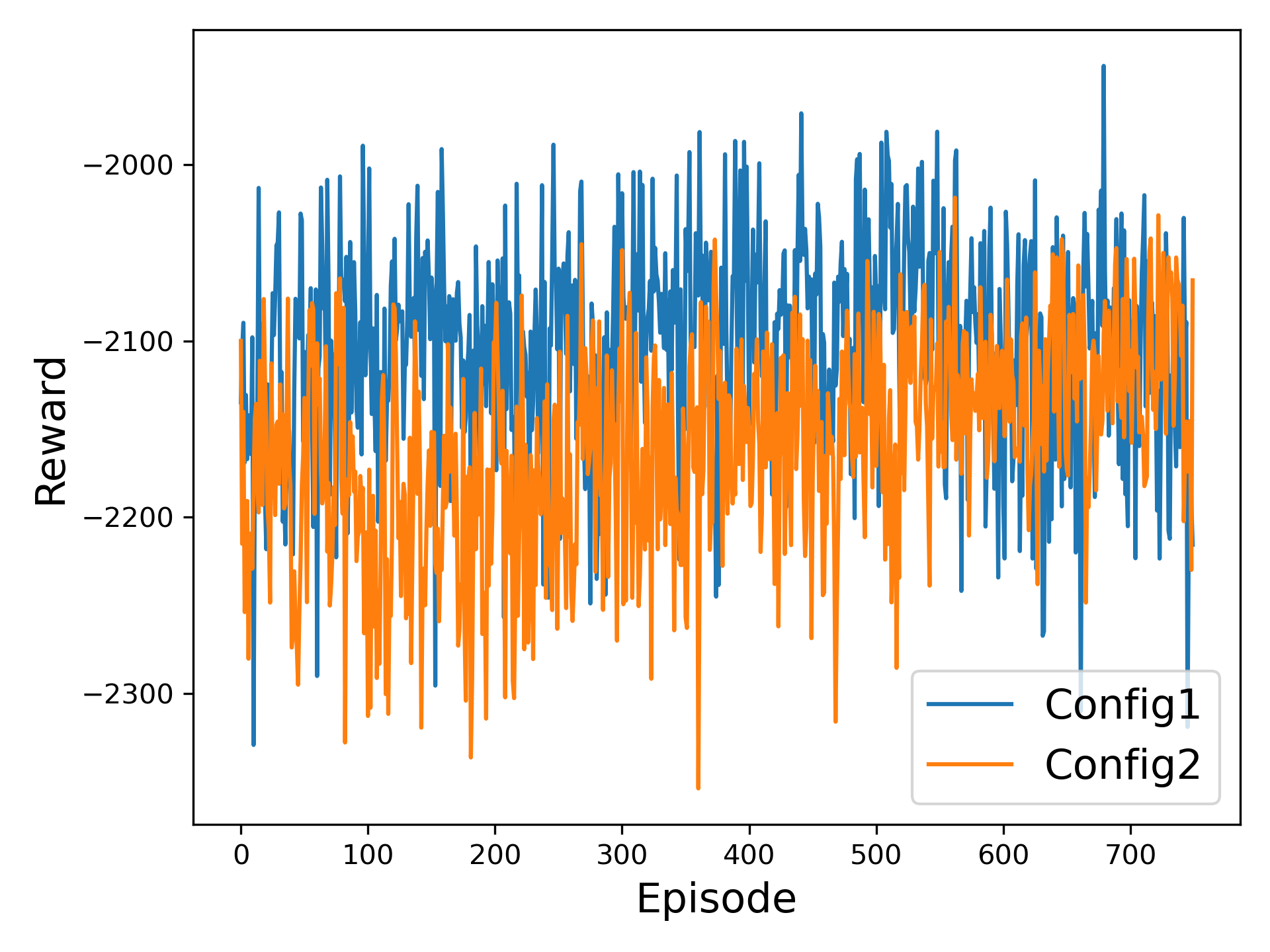}
    }
    \hfill
      \subfloat[Synthetic-1]{
    \includegraphics[width=0.23\textwidth]{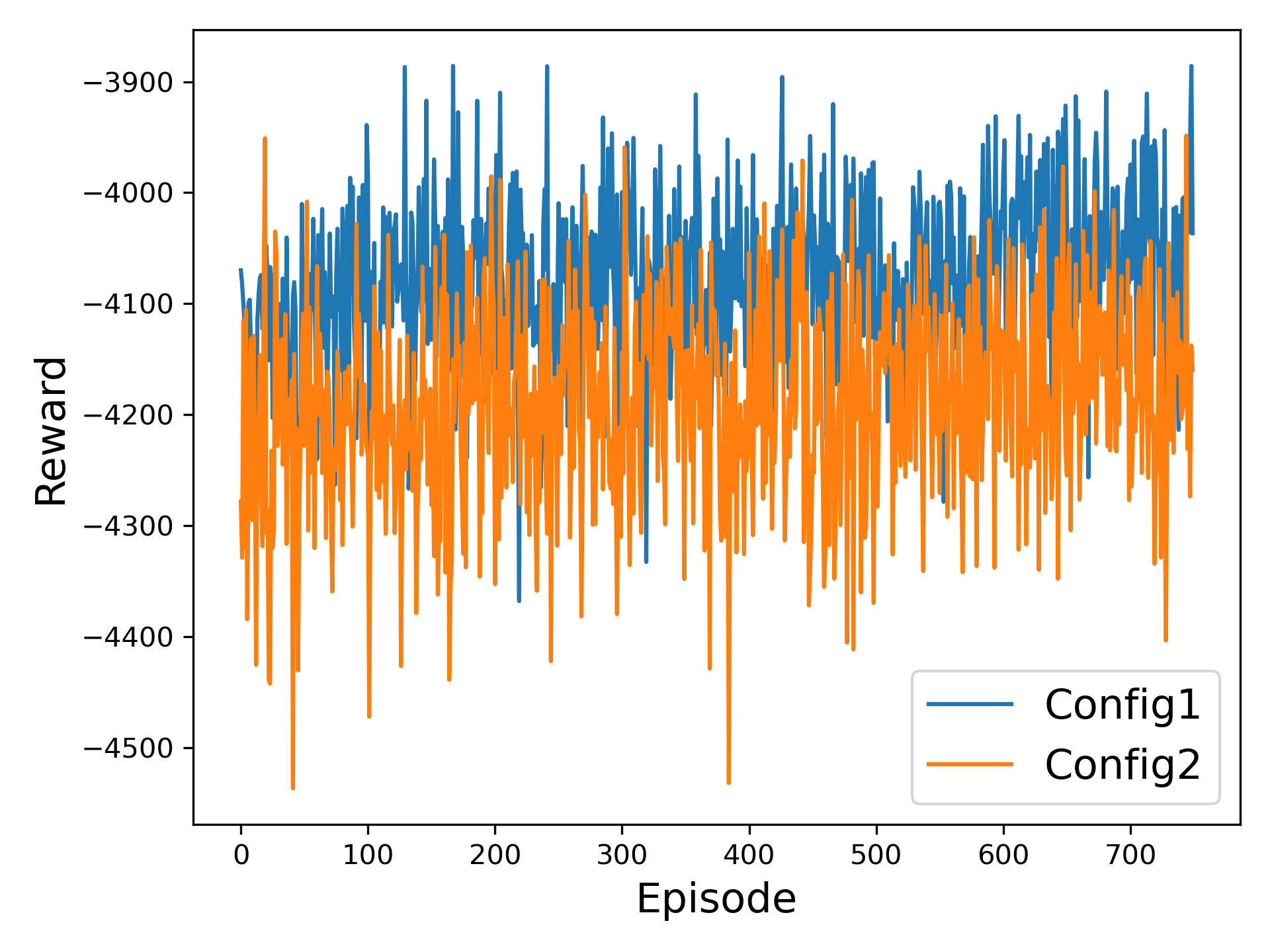}
    }
        \hfill
      \subfloat[Synthetic-2]{
    \includegraphics[width=0.23\textwidth]{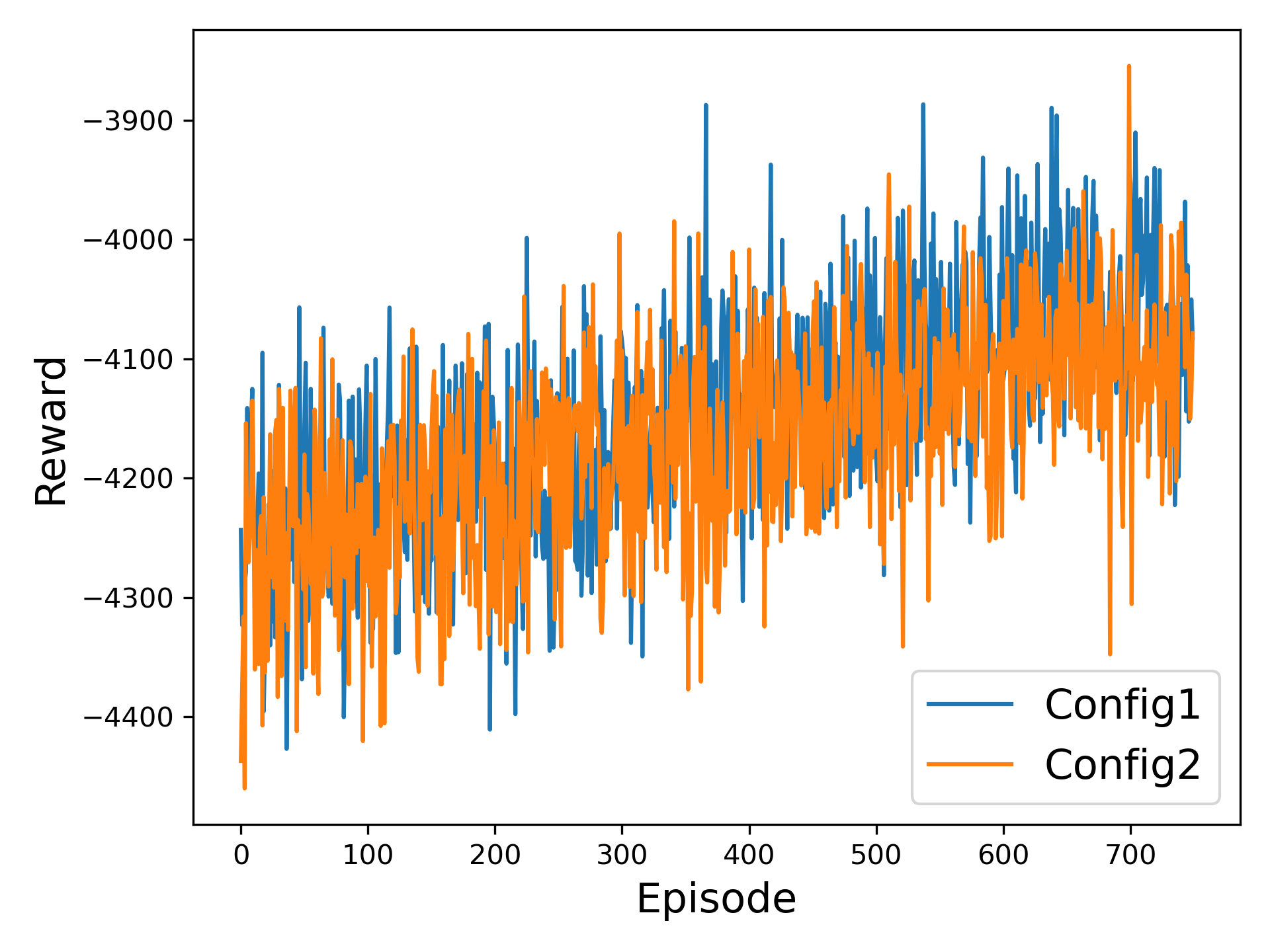}
    }
    \caption{Learning curves of two region configurations in $4\times 4$ grid.}
    \label{fig:reward curve two config}
\end{figure*}
As illustrated in Fig. \ref{fig:partition example}, we generate the unique region configuration based on one minimum dominating set and only one intersection in the region is fictitious. However, one graph may have different minimum dominating sets which generate new sets of region configuration. For the $4\times 4$ grid network, there are two minimum dominating sets. The region configuration generated by first minimum dominating set is  $I_{1-3}, I_{2-1}, I_{3-4}, I_{4-2}$ (Config1) in Fig. \ref{fig:partition example}. The other unique region configuration under the other minimum dominating set is $I_{1-2}, I_{2-4}, I_{3-1}, I_{4-3}$ (Config2). 
To ensure the completeness of the experiment, we compare the performance of agents under both configurations.
The learning curves are plotted in Fig. \ref{fig:reward curve two config}.
Agents under both configurations converge but Config2 has a lower converged performance. These training curves indicate that the configuration of regions can affect the training process of agents. In different configurations, agents observe different sets of intersections and traffic flows of these intersections are different. 
The numerical results are listed in Table \ref{tab:performance evaluation two config}. We can observe that there is no significant difference and the numerical results of Config2 are still the best among all baselines.
\begin{table}[h]
    \centering
        \caption{Numerical Statistics of Two Configurations}
    \begin{tabular}{|c|c|c|c|}
    \hline
         Flow&Metric&Config1&Config2 \\
         \hline
         \multirow{3}{*}{Flat}&
         ATT&319.14$\pm$0.43&320.62$\pm$0.48\\&
         AQL&0.07$\pm$0.0016&0.07$\pm$0.0019\\&
         TP&2963.27$\pm$0.89&2963.03$\pm$0.81\\

         \hline
         \multirow{3}{*}{Peak}&
         ATT&402.02$\pm$3.00&406.99$\pm$4.67\\
        &AQL&0.438$\pm$0.01&0.443$\pm$0.009\\
        &TP&6382.19$\pm$10.9&6369.724$\pm$15.2\\

         \hline
         \multirow{3}{*}{Synthetic-1}&
         ATT&206.6$\pm$1.3&208.27$\pm$1.27\\
        &AQL&0.85$\pm$0.0157&0.86$\pm$0.0162\\
        &TP&11227.93$\pm$1.07&11227.41$\pm$ 1.34\\
    \hline
         \multirow{3}{*}{Synthetic-2}&
         ATT&381.38$\pm$1.20&381.80$\pm$1.2\\
        &AQL&0.85$\pm$0.015&0.86$\pm$0.015\\
        &TP&11033.78$\pm$8.75&11030.73$\pm$8.84\\
         \hline
    \end{tabular}

    \label{tab:performance evaluation two config}
\end{table}

If the minimum dominant set does not meet Theorem \ref{theorem: uniqueness}, then different iteration orders of line 13 in Algorithm \ref{alg:Construct} can construct different sets of region configuration according to Remark \ref{remark: reconstruct}. We tried several assignment orders to investigate whether different assignment orders could affect performance. From Table \ref{tab:performance evaluation two config}, even for configurations constructed from different minimum dominating sets, the performance of models under those configurations is very close. Therefore, we listed the results of worst performance, average performance, and best performance after convergence. From Table \ref{tab:performance evaluation assignment order}, numerical statistics indicate that, although performance under different assignment orders is not identical, the difference is very close and performance is consistent. 
\begin{table}[h]
    \centering
        \caption{Numerical Statistics of Different Assignment Orders in Manhattan Scenario}
    \begin{tabular}{|c|c|c|c|}
    \hline
         &ATT&AQL&TP \\
         \hline
        Best Performance&176.03$\pm$0.501&0.0241$\pm$0.0412&2824$\pm$0.0\\
         
         \hline
         Average Performance&177.64$\pm$0.684&0.0245$\pm$0.0415&2824$\pm$0.0\\
         \hline
         Worst Performance&178.25$\pm$0.641&0.0249$\pm$0.0419&2824$\pm$0.0\\
         \hline
    \end{tabular}

    \label{tab:performance evaluation assignment order}
\end{table}
\subsection{Improvement of ABDQ over BDQ and DDPG+WA}
The computation of target values in ABDQ only involves activated branches to relieve the negative influence of fictitious intersections.
\begin{table}[h]
    \centering
        \caption{Improvement by ABDQ }
    \begin{tabular}{|c|c|c|c|}
    \hline
         Flow&Metric& Config1&$2\times 3$ Grid\\
         \hline
         \multirow{3}{*}{Flat}&
         ATT&2.8\%&3.5\%\\&
         AQL&9.7\%&24.1\%\\&
         TP&0.14\%&0.27\% \\
         \hline
         \multirow{3}{*}{Peak}&
         ATT&1.1\%&1.8\%\\
        &AQL&7.8\%&19.3\%\\
        &TP&0.12\%&0.42\%\\
         \hline
         \multirow{3}{*}{Synthetic-1}&
         ATT&1.7\%&2.4 \%\\
        &AQL&8.5\%&11.8\%\\
        &TP&0.27\%&0.34\%\\
         \hline
        \multirow{3}{*}{Synthetic-2}&
         ATT&2.3\%&5.2\%\\
        &AQL&9.4\%&18.9\%\\
        &TP&0.35\%&1.3\%\\
         \hline
    \end{tabular}
    \label{tab:Improvement by DBDQ}
\end{table}
 To show the advantage of ABDQ, we applied BDQ on "Config1" and ABDQ on $2 \times 3$ grid regions. "Config1" has one fictitious intersection and the $2 \times 3$ grid region has two fictitious intersections. Both regions have four activated branches in BDQ and ABDQ. In Table \ref{tab:Improvement by DBDQ}, the performance in both region configurations is improved by ABDQ. For different scenarios, the improvement in the Hangzhou flat scenario is more significant than that in other scenarios. For different region configurations, the improvement of the grid region is more significant than "Config1".
 It is probably because there are more fictitious intersections in grid regions and ABDQ alleviates the negative of fictitious intersections effectively.
\section{Conclusion and Future Work}
In this paper, we proposed a novel regional signal control framework for TSC. Our network partitioning rule has the potential to be applied in non-grid networks because the topology of our region is a star which is composed of one center and an arbitrary number of leaves. Meanwhile, we proposed ABDQ to search for optimal joint action and to mitigate the negative influence of fictitious intersections. There are three major advantages of ABDQ: One is that the search for optimal actions is more efficient because the output size of the neural network grows linearly as the size of the region increases. The second one is intra-regional cooperation control. The last one is that ABDQ mitigates the influence of fictitious intersections by calculating target value and loss based on activated branches.

We have carried out comprehensive experiments to evaluate the performance and robustness of our RL agent. Experiments show that our RL agent achieves the best performance in both real and synthetic scenarios, especially in scenarios with a high-density flow. Also, we observed that the distance between intersections can affect the convergence rate of agents and their performance. Interestingly, different configurations and assignment orders can influence the performance of agents, but the difference is not significant.
One limitation of our framework is that we model regional RL agents as independent learners and no explicit cooperation is designed between regions.

In the future, we will focus on the maximum capacity of our region by increasing hops when defining the neighborhood of one intersection. It is also worthwhile to investigate the performance of our model on 8-phases traffic signals and in massive-scale or non-grid traffic networks. Meanwhile, region-wise cooperation is also a promising research direction.
\label{sec: conclusion}
\bibliographystyle{IEEEtran}
\bibliography{reference}
\end{document}